\if@twocolumn\PassOptionsToPackage{switch}{lineno}\else\fi\makeatother
\theoremstyle{thmstyleone}%
\newtheorem{theorem}{Theorem}
\newtheorem{proposition}[theorem]{Proposition}%
\newtheorem{lemma}[theorem]{Lemma}%
\theoremstyle{thmstylethree}%
\newtheorem{definition}{Definition}%
\newtheorem{assumption}{Assumption}%
\newtheorem{remark}{Remark}%
\begin{document}

\title[Risk-averse Contextual Multi-armed Bandit Problem with Linear Payoffs]{Risk-averse Contextual Multi-armed Bandit Problem with Linear Payoffs}


\author{\fnm{Yifan} \sur{Lin}}\email{ylin429@gatech.edu}
\equalcont{These authors contributed equally to this work.}

\author{\fnm{Yuhao} \sur{Wang}}\email{yuhaowang@gatech.edu}
\equalcont{These authors contributed equally to this work.}

\author*{\fnm{Enlu} \sur{Zhou}}\email{enlu.zhou@isye.gatech.edu}

\affil{\orgdiv{ISyE}, \orgname{Georgia Institute of Technology}, \orgaddress{\street{755 Ferst Dr NW}, \city{Atlanta}, \postcode{30332}, \state{GA}, \country{USA}}}




\abstract{In this paper we consider the contextual multi-armed bandit problem for linear payoffs under a risk-averse criterion. At each round, contexts are revealed for each arm, and the decision maker chooses one arm to pull and receives the corresponding reward. In particular, we consider mean-variance as the risk criterion, and the best arm is the one with the largest mean-variance reward. We apply the Thompson Sampling algorithm for the disjoint model, and provide a comprehensive regret analysis for a variant of the proposed algorithm. For $T$ rounds, $K$ actions, and $d$-dimensional feature vectors, we prove a regret bound of $O((1+\rho+\frac{1}{\rho}) d\ln T  \ln \frac{K}{\delta}\sqrt{d K  T^{1+2\epsilon} \ln \frac{K}{\delta} \frac{1}{\epsilon}})$ that holds with probability $1-\delta$ under the mean-variance criterion with risk tolerance $\rho$, for any $0<\epsilon<\frac{1}{2}$, $0<\delta<1$. The empirical performance of our proposed algorithms is demonstrated via a portfolio selection problem.}

\keywords{multi-armed bandit, context, risk-averse, Thompson sampling}



\maketitle
\section{Introduction}
\label{sec: intro}
The multi-armed bandit (MAB) problem is a classic online decision-making problem with limited feedback. In the standard MAB problem, in each of $T$ rounds, a decision maker plays one of the $K$ arms and receives a reward (also called payoff) of that arm. It has a wide variety of real-world applications, such as clinical trials, online advertisement, and portfolio selection. In certain situations, the decision maker may also be provided with contexts (also known as covariates or side information). For example, in personalized web services, the decision maker also knows the demographic, geographic, and behavioral information of the user (see \cite{li2010contextual}), which may be useful to infer the conditional average reward of an arm and allows the decision maker to personalize decisions for every situation and even improve the average reward over time. In this paper, we consider the contextual MAB problem. As opposed to the standard MAB problem, before making the choice of which arm to play, the decision maker observes a $d$-dimensional context $x_i$ associated with each arm $i$, and chooses an arm to play in the current round based on the rewards of the arms played in the past along with the contexts. In this paper, we assume the expected reward of each arm is linear in the context, i.e., we assume there is an underlying mean parameter $\mu_i \in \mathbb{R}^d$ for each arm $i$, such that the expected reward for each arm $i$ takes the form $x_i^{\top} \mu_i$. A class of predictors is said to be linear if each predictor predicts which arm gives the best expected reward that is linear in the observed context. The linear assumption leads to a succinct and tractable representation and is enough for good real-world performance (see \cite{li2010contextual}). The goal of the decision maker is to minimize the so-called regret, with respect to the best linear predictor in hindsight, who predicts exactly $\mu_i$ and pulls the arm with the largest expected reward $x_i^{\top} \mu_i, i \in [K]$.

The MAB (or contextual MAB) problem essentially seeks a trade-off between exploitation (of the current information by playing the arm with the highest estimated reward) and exploration (by playing other arms to collect reward information). The majority of the literature balance this trade-off by designing algorithms that maximize the expected total reward (or equivalently, minimize the expected total regret). However, in many real-world problems, maximizing the expected reward is not always the most desirable. For example, in the portfolio selection problem, some portfolio managers are risk-averse and prefer less risky portfolios with low expected return rather than highly risky portfolios with high expected return. In this case, the risk of the reward should also be taken into consideration. Motivated by such risk consideration in real-world problems, we take a risk-averse perspective on the stochastic contextual MAB problem. Although many risk measures have been used in the risk-averse MAB problems, we focus on the mean-variance paradigm (see \cite{markowitz1952portfolio}), given its advantages in interpretability, computation, and popularity among practitioners. To the best of our knowledge, we are among the first to consider the risk-averse contextual MAB problem. 

To solve risk-averse contextual MAB, we propose algorithms based on Thompson Sampling (TS, see \cite{thompson1933likelihood}). TS is one of the earliest heuristics for the MAB problems via a Bayesian perspective. Intuitively speaking, TS assumes a prior distribution on the underlying parameters of the reward distribution for each arm and updates the posterior distributions after pulling the arms. At each round, it samples from the posterior distribution for each arm, and plays the arm that produces the best sampled reward. Most of the existing literature that apply TS to the MAB problem does not care about the variance of the reward distribution, as they intend to maintain a low expected regret. However, in the risk-averse setting, the variance also plays a vital role in determining the best arm. Hence, in addition to sampling the reward mean, we also need to sample the reward variance. It poses great challenges to the Bayesian updating of the parameters as well as the regret analysis, as one also has to bound the deviation of the sampled reward variance. 
We then provide the theoretical analysis of a variant of the proposed algorithms and show a $O((1+\rho+\frac{1}{\rho}) d\ln T  \ln \frac{K}{\delta}\sqrt{d K  T^{1+2\epsilon} \ln \frac{K}{\delta} \frac{1}{\epsilon}})$ regret bound with high probability $1-\delta$, under some mild assumptions on the reward. The variant has the same posterior updating rule but carries out the sampling process differently, due to the technical difficulty in the regret analysis.

\subsection{Literature Review}\label{subsec: lit}
The contextual MAB problem is widely studied for online decision making.
Two principled approaches that balance the trade-off between exploitation and exploration in the contextual MAB problem are upper confidence bound (UCB) and TS. The UCB algorithm addresses the problem from a frequentist perspective, which captures the uncertainty of the reward distribution by the width of the confidence bound. Most notably, \cite{auer2002using} considers the contextual MAB problem with linear reward, and presents LinRel, a UCB-type algorithm that achieves an $\tilde{O}(\sqrt{Td})$ high probability regret bound. \cite{li2010contextual} present LinUCB, which utilizes ridge regression to estimate the mean parameters and is much easier to solve than LinRel. \cite{chu2011contextual} later show an $O\left(\sqrt{T d \ln ^{3}(K T \ln (T) / \delta)}\right)$ regret bound that holds with probability $1-\delta$ for a variant of LinUCB. \cite{abbasi2011improved} modify the UCB algorithm in \cite{auer2002using} and improve the regret bound by a logarithmic factor. 

On the other hand, TS is Bayesian approach to the same problem and is shown to be competitive to or better than the UCB algorithms (see \cite{chapelle2011empirical}). Most notably, \cite{agrawal2013thompson} provide the first theoretical guarantee for the TS algorithm on the contextual MAB problem, and shows an $\tilde{O}(d^{3/2}\sqrt{T})$ (or $\tilde{O}(d\sqrt{T \log(K)})$) high probability regret bound. Note that in the regret analysis of \cite{agrawal2013thompson}, the reward distribution is only assumed to be sub-Gaussian, while the Bayesian updating of the posterior parameters assumes a Gaussian likelihood with an unknown mean but known variance. We also assume a Gaussian likelihood, but with (unknown mean and) unknown variance. It is well known that the conjugate prior (see \cite{schlaifer1961applied}) for the Gaussian likelihood is normal-gamma. Later in the paper we will show that in the case when the reward distribution is not Gaussian, we can still use the normal-gamma posterior updating rule such that the normal-gamma posterior converges in distribution to a point mass at the mean and variance of the true reward distribution.

The aforementioned literature on the contextual MAB problem all assume the best arm is chosen according to its expected performance. In some applications, however, the risk of the reward should also be taken into consideration. The risk-aware (or risk-averse) MAB problems have drawn increasing attention recently. \cite{sani2012risk} consider the mean-variance risk criterion and present a UCB-type algorithm (MV-UCB). Later \cite{vakili2015mean}, \cite{vakili2016risk} further complete the regret analysis of the MV-UCB algorithm. It should be noted that their definition of the regret is with respect to the mean-variance over a given horizon, which does not generalize to the contextual setting because the mean of each arm also depends on the context given at each round. \cite{galichet2013exploration} consider the conditional value at risk (CVaR, see \cite{rockafellar2000optimization}) criterion and present the multi-armed risk-aware bandit (MaRaB), a UCB-type algorithm. However, while they choose to pull the arm with the largest empirical CVaR reward, they consider the expected regret (rather than CVaR) and analyze the regret under the assumption that $\alpha=0$ and that the CVaR and average criteria coincide. \cite{maillard2013robust} consider entropic risk measure and present RA-UCB algorithm that achieves logarithmic regret. \cite{zimin2014generalized} consider risk measures as a continuous function of reward mean and reward variance, and present a UCB-type algorithm that achieves logarithmic regret. We extend their regret definition to the contextual setting, while one should note that the optimal arm at each round may vary, depending on the given contexts. \cite{cassel2018general} provide a more systematic approach to analyzing general risk criteria within the stochastic MAB formulation, and design a UCB-type algorithm that reaches $O\left(\sqrt{T}\right)$ regret bound. \cite{liu2020risk} design a UCB-type algorithm for the mean-variance MAB problem with Gaussian rewards. \cite{khajonchotpanya2021revised} develop a UCB-based algorithm that maximizes the expected empirical CVaR. On the other hand, even though TS is empirically shown (see \cite{baudry2021optimal}) to outperform UCB-type algorithms that often suffer from non-optimal confidence bounds, it has been considered only very recently for the risk-averse setting, due to its lack of theoretical understanding and difficulty of analyzing the regret bound. \cite{chang2020risk} consider to minimize the expected cost with CVaR constraint on the cost (threshold constraint) using CVaR-TS algorithm, which achieves improvements over the compared UCB-based algorithms for Gaussian bandits with finite variances. \cite{zhu2020thompson} design TS algorithms for Gaussian bandits and Bernoulli bandits, with the optimization objective being the mean-variance. \cite{ang2021thompson} present a Thompson sampling algorithm to minimize the entropic risk for Gaussian MABs, using similar tricks in \cite{zhu2020thompson}. 

\subsection{Contributions and Outlines}
In this paper, we consider contextual MAB problems with linear rewards under the mean-variance risk criterion. Our contributions are summarized as follows:
\begin{itemize}
    \item We develop a TS algorithm, namely mean-variance TS disjoint (MVTS-D) for the disjoint model. 
    \item We present the theoretical analysis of a variant of the proposed algorithm under some mild assumptions. The regret analysis is a non-trivial extension of the existing results in \cite{agrawal2013thompson} to the risk-averse setting, as the considered mean-variance criterion greatly complicates the regret analysis.
    \item We carry out extensive sets of simulations on different reward distributions to demonstrate the performance of our proposed algorithms in a portfolio selection example. 
\end{itemize}

The rest of the paper is organized as follows. Section~\ref{sec: preliminary} introduces the contextual MAB problem under the mean-variance paradigm. Section~\ref{sec: algorithm} 
presents the Thompson Sampling algorithm for the disjoint model. We then conduct the regret analysis for a variant of the proposed algorithm in Section~\ref{sec: analysis}. Section~\ref{sec: numerical} demonstrates the empirical performance of the proposed algorithms with a portfolio selection example. Section~\ref{sec: conclusion} concludes the paper.

\section{Problem Setting}
\label{sec: preliminary}
Consider a contextual MAB problem with $K$ arms. At each round $t=1,2,\cdots,T$, a context $x_i(t) \in \mathbb{R}^{d}$ is revealed for each arm $i \in K$. The contexts can be chosen arbitrarily by an adversary. After observing the contexts, the decision maker plays one of the $K$ arms $a(t)$ and receives the reward $r_{a(t)}(t)$. We assume the reward for arm $i$ at round $t$ is generated from an unknown distribution $\nu_i (\mu_i,\sigma_i^2)$ with mean $x_i(t)^{\top}\mu_i$ and variance $\sigma^2_i$, where $\mu_i \in \mathbb{R}^{d}$ is a fixed but unknown mean parameter for the arm $i$, and $\sigma^2_i$ is a fixed but unknown variance parameter for the arm $i$. This model is called \emph{disjoint} since the mean parameter and variance parameter are not shared among different arms. 
Define the history $\mathcal{H}_{t}=\{x_{i}(t), a(\tau), r_{a(\tau)}(\tau), i=1,\cdots,K, \tau=1,\cdots,t\}$, which summarizes all the information of contexts, pulled arms, and corresponding rewards up to round $t$. All reward samples are independent conditioned on the choice of the arm and the context, and thus
\begin{align*}
    & ~~~~\mathbb{E}[r_{a(t)}(t)\mid\{x_i(t)\}_{i=1}^{K},a(t),\mathcal{H}_{t-1}]\\
    &=\mathbb{E}[r_{a(t)}(t)\mid a(t),x_{a(t)}(t)]= x_{a(t)}(t)^{\top}\mu_{a(t)},
\end{align*}
and similarly 
\begin{align*}
    & ~~~\operatorname{Var}[r_{a(t)}(t)\mid\{x_i(t)\}_{i=1}^{K},a(t),\mathcal{H}_{t-1}]\\
    &=\operatorname{Var}[r_{a(t)}(t)\mid a(t),x_{a(t)}(t)]= \sigma^2_{a(t)}.
\end{align*}

\begin{definition}
The mean-variance of arm $i$ at round $t$ with mean $x_i(t)^{\top} \mu_i$, variance $\sigma^2_i$, referred to as regret mean and regret variance, and risk tolerance $\rho$, is denoted by $\operatorname{MV}_i(t):=x_i(t)^{\top} \mu_i - \rho \sigma^2_i$. 
\end{definition}

It should be noted that different from the traditional contextual MAB problem, the criterion of choosing an optimal arm is based on the mean-variance performance. The risk tolerance parameter reflects the risk attitude of the decision maker. When $\rho=0$, the decision maker is risk-neutral, and our problem is reduced to the traditional contextual MAB with a risk-neutral criterion. When $\rho \to \infty$, the decision maker hates the risk so much that our problem turns to a variance minimization problem, which can also be easily handled by only sampling regret variance and choosing the arm with the smallest sampled regret variance to pull. Hence, we focus on the more interesting case $0<\rho<\infty$. As a final note, our algorithm and analysis can also be easily extended to the case $\rho<0$, which means the decision maker is risk-seeking. 

A policy, or allocation strategy $\pi$, is an algorithm that chooses at each round $t$, an arm $a(t)$ to pull, based on the history $\mathcal{H}_{t-1}$ and the context $x_i(t)$ for $i \in [K]$. Let $a^{*}(t)$ denote the optimal arm to pull at round $t$ under the mean-variance criterion, i.e., $a^{*}(t) := \arg \max_{i \in [K]} x_i(t)^{\top}\mu_i - \rho \sigma_i^2$. Let $\Delta_i(t)$ denote the difference between the mean-variances of the optimal arm $a^{*}(t)$ and arm $i$, i.e., 
\begin{align*}
    \Delta_i(t) = \operatorname{MV}_{a^{*}(t)}(t) - \operatorname{MV}_i(t). 
\end{align*}

The regret at round $t$ is defined as $R(t) = \Delta_{a(t)}(t)$, where $a(t)$ is the arm to pull at round $t$, determined by the algorithm $\pi$. The goal is to minimize the total regret $\mathcal{R}(T)=\sum_{t=1}^{T}R(t)$, or in other words, design an algorithm whose regret increases as slowly as possible as $T$ increases. Note that the optimal policy may not be a single-arm policy, since the optimal arm at each round $t$ is determined by the given contexts.

\section{Thompson Sampling}
\label{sec: algorithm}
In this section, we consider TS for the risk-averse contextual MAB problem under the disjoint model. In case the true reward distribution is Gaussian, it is natural to use Gaussian likelihood (for the rewards) and normal-gamma conjugate prior (for the mean and variance parameters) to design our Thompson Sampling algorithm. For a more general reward distribution that may not be Gaussian, usually we can take two approaches. First, we can choose the same likelihood as the true reward distribution, if the latter is known, and update the posterior distribution accordingly. The resulting posterior, however, is usually intractable, and one may choose to use Markov Chain Monte Carlo (MCMC) to sample from the posterior. The latest work along this line can be found in \cite{xu2022langevin}. An alternative approach, especially when the true reward distribution is unknown, is that we still choose the Gaussian likelihood, and then apply the variational Bayes (VB) technique to obtain a tractable approximate posterior. We show the details of VB under model mis-specification in the subsection below, where model mis-specification refers to that the likelihood (the chosen model) is different from the true reward distribution (the true model).

\subsection{Variational Bayes under Model Mis-specification}
For ease of exposition, we first introduce some notations and a simplified problem setting. Denote the mean and variance parameters of the true reward distribution by $\theta=(\mu,\sigma^2)$. The reward $y_1,\cdots,y_n$ are $n$ data points independently and identically distributed (i.i.d.) with a true density $p_0(\cdot)$. The Gaussian likelihood is denoted by $p(y \mid \theta)$. The mean field variational Bayes (MFVB) approximates the exact posterior distribution (updated using the likelihood) by a probability distribution with density $q(\theta)$ belonging to some tractable family of distributions $\mathcal{Q}$ that are factorizable, i.e., $\mathcal{Q}=\left\{q(\theta): q(\theta)= q_{1}\left(\mu\right)q_{2}\left(\sigma^2\right)\right\}$. The optimal VB posterior $q^{*}(\theta)$ is then found by minimizing the Kullback-Leibler (KL, see \cite{kullback1951information}) divergence from the exact posterior distribution $p(\theta \mid y_1,\cdots,y_n)$ to $\mathcal{Q}$, i.e., 
\begin{align*}
    q^{*}(\theta) =  \underset{q (\theta) \in \mathcal{Q}}{\arg \min }\Big\{& \operatorname{KL}(q \| p(\theta \mid y_1,\cdots,y_n))\\
    &:=\int q(\theta) \log \frac{q(\theta)}{p(\theta \mid y_1,\cdots,y_n)} d \theta\Big\}.
\end{align*}

It is shown in \cite{tran2021practical} that the optimal VB posterior takes the form of normal-gamma by choosing a normal-gamma prior and the Gaussian likelihood. It should be noted that the VB posterior is an approximation to the exact posterior. According to the Bernstein-Von Mises theorem under the model mis-specification, the exact posterior converges in distribution to a point mass at $\theta^{*}$ (\cite{kleijn2012bernstein}), where $\theta^{*}$ is the value of $\theta$ that minimizes the KL divergence between the assumed likelihood and the true reward distribution, i.e.,
\begin{align}
    \theta^{*}=\underset{\theta}{\arg \min } \operatorname{KL}\left(p_{0}(y) \| p(y \mid \theta)\right).
\label{eq: theta_star}
\end{align}

\cite{wang2019variational} later show that the VB posterior also converges in distribution to a point mass at $\theta^{*}$, and the VB posterior mean converges almost surely to $\theta^{*}$. It should be noted that since we choose the Gaussian likelihood, the parameter $\theta^{*}$ that minimizes the KL divergence in \eqref{eq: theta_star} is exactly the same as the parameter obtained by setting the first two moments of the Gaussian likelihood to the first two moments of the true reward distribution (see \cite{minka2001expectation, kurz2016kullback}), i.e., $\theta^{*}$ corresponds to the mean and variance parameters of the true reward distribution. Since we consider the mean-variance objective, we only care about the accuracy of the mean and variance estimates of the reward distribution. Hence, the normal-gamma posterior updating is justified by its convergence to the mean and variance of the true reward distribution, even under the model mis-specification. Hence, in the rest of the paper, we consider the Gaussian likelihood with a normal-gamma prior on the mean and variance parameters. 

\subsection{TS for the Disjoint Model}
Suppose the likelihood of reward $r_i(t)$ for arm $i$ at round $t$, given the context $x_i(t)$, the mean parameter $\mu_i$ and the precision parameter $\lambda_i$ (reciprocal of the variance parameter $\sigma_i^2$), were given by the probability density function (p.d.f.) of the Gaussian distribution $\mathcal{N}(x_{i}(t)^{\top}\mu_i, \lambda_i^{-1})$. Let $T_i(t)$ be the set of the rounds that arm $i$ has been pulled during the first $t-1$ rounds. We show the Bayesian updating of the parameters in the next proposition. 

\begin{proposition}\label{prop: posterior_disjoint}
Suppose the prior for $\lambda_i$ at round $t$ is given by Gamma$(C_i(t),D_i(t))$, and conditioned on $\lambda_i$, the prior for $\mu_i$ at round $t$ is given by $\mathcal{N}(A_i(t)^{-1}b_i(t), (\lambda_i A_i(t))^{-1})$. Here $C_i(t)$ is the shape parameter and $D_i(t)$ is the rate parameter of the Gamma distribution. Let
\begin{align*}
    A_{i}(t)= \mathbf{I}_d+\sum_{s \in T_i(t)}x_i(s)x_i(s)^{\top},
\end{align*} 
\begin{align*}
    b_i(t)= \mathbf{0}_{d \times 1} + \sum_{s \in T_i(t)}x_i(s)r_i(s),
\end{align*}
where $\mathbf{I}_d$ is a $d$-dimensional identity matrix, $\mathbf{0}_{d \times 1}$ is a $d$-dimensional zero vector. Then the posterior for $\lambda_i$ is given by Gamma($C_i(t+1),D_i(t+1)$), and conditioned on $\lambda_i$, the posterior for $\mu_i$ is given by $\mathcal{N}(A_i(t+1)^{-1}b_i(t+1), (\lambda_i A_i(t+1))^{-1})$, where
\begin{align*}
    C_i(t+1)=C_i(t) + \frac{1}{2},
\end{align*}
\begin{align*}
    D_i(t+1) = D_i(t) + \frac{1}{2} & [-b_i(t+1)^{\top}A_i(t+1)^{-1}b_i(t+1) \\
    & + b_i(t)^{\top}A_i(t)^{-1}b_i(t) + r_i(t)^2].
\end{align*}
\end{proposition}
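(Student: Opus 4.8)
The plan is to verify conjugacy by direct computation: form the unnormalized posterior as the product of the normal-gamma prior and the Gaussian likelihood of the single new observation $r_i(t)$, and then regroup the exponent so that it factors into a conditional normal in $\mu_i$ times a Gamma in $\lambda_i$ with the claimed parameters. Before touching the density, I would record the two bookkeeping identities that come straight from the definitions of $A_i$ and $b_i$: since arm $i$ is pulled at round $t$, we have $T_i(t+1)=T_i(t)\cup\{t\}$, whence
\[
A_i(t+1)=A_i(t)+x_i(t)x_i(t)^{\top},\qquad b_i(t+1)=b_i(t)+x_i(t)r_i(t).
\]

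Next I would write the joint posterior as the product of three factors — the Gamma prior $\lambda_i^{C_i(t)-1}e^{-D_i(t)\lambda_i}$, the conditional normal prior $\lambda_i^{d/2}\exp\!\big(-\tfrac{\lambda_i}{2}(\mu_i-A_i(t)^{-1}b_i(t))^{\top}A_i(t)(\mu_i-A_i(t)^{-1}b_i(t))\big)$, and the likelihood $\lambda_i^{1/2}\exp\!\big(-\tfrac{\lambda_i}{2}(r_i(t)-x_i(t)^{\top}\mu_i)^2\big)$ — and then expand both exponents and collect the terms as a quadratic form in $\mu_i$. Using the identities above, the quadratic coefficient becomes $-\tfrac{\lambda_i}{2}\,\mu_i^{\top}\big(A_i(t)+x_i(t)x_i(t)^{\top}\big)\mu_i=-\tfrac{\lambda_i}{2}\,\mu_i^{\top}A_i(t+1)\mu_i$, and the linear coefficient becomes $\lambda_i\,\mu_i^{\top}\big(b_i(t)+x_i(t)r_i(t)\big)=\lambda_i\,\mu_i^{\top}b_i(t+1)$.

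I would then complete the square in $\mu_i$ about the point $A_i(t+1)^{-1}b_i(t+1)$. This factors the joint density exactly into the claimed conditional normal $\mathcal{N}\!\big(A_i(t+1)^{-1}b_i(t+1),(\lambda_i A_i(t+1))^{-1}\big)$ times a function of $\lambda_i$ alone, leaving behind the remainder term $+\tfrac{\lambda_i}{2}\,b_i(t+1)^{\top}A_i(t+1)^{-1}b_i(t+1)$ that must be carried into the $\lambda_i$-only factor. Gathering every term free of $\mu_i$ — the rate $-D_i(t)\lambda_i$, the prior constant $-\tfrac{\lambda_i}{2}b_i(t)^{\top}A_i(t)^{-1}b_i(t)$, the likelihood constant $-\tfrac{\lambda_i}{2}r_i(t)^2$, and the completing-the-square remainder — produces precisely the claimed rate $D_i(t+1)$. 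For the shape parameter I would tally the powers of $\lambda_i$: the $\lambda_i^{d/2}$ from the normal prior is reclaimed by the normalization of the conditional normal posterior, so the residual Gamma factor keeps $\lambda_i^{C_i(t)-1}\cdot\lambda_i^{1/2}=\lambda_i^{C_i(t+1)-1}$, giving $C_i(t+1)=C_i(t)+\tfrac12$.

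The only step demanding genuine care is the bookkeeping of the completing-the-square remainder — both its sign and the fact that it is evaluated at the \emph{updated} parameters $A_i(t+1),b_i(t+1)$ rather than at the prior ones. This is exactly what generates the $-b_i(t+1)^{\top}A_i(t+1)^{-1}b_i(t+1)+b_i(t)^{\top}A_i(t)^{-1}b_i(t)$ combination in $D_i(t+1)$; once that term is tracked correctly, the remainder of the argument is routine algebra and normalization.
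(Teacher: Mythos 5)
Your proposal is correct and follows essentially the same route as the paper's proof: multiply the normal-gamma prior by the Gaussian likelihood, collect the quadratic form in $\mu_i$ using $A_i(t+1)=A_i(t)+x_i(t)x_i(t)^{\top}$ and $b_i(t+1)=b_i(t)+x_i(t)r_i(t)$, complete the square about $A_i(t+1)^{-1}b_i(t+1)$, and absorb the leftover $\mu_i$-free terms into the Gamma rate. If anything, your bookkeeping of the $\lambda_i$ powers is slightly more careful than the paper's (you correctly write $\lambda_i^{d/2}$ for the $d$-dimensional conditional normal, where the paper loosely writes $\sqrt{\lambda_i}$), but since that factor cancels between the prior and posterior conditional normals it does not change the argument.
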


Please refer to Appendix~\ref{subsec: app_disjoint} for detailed proof of Proposition~\ref{prop: posterior_disjoint}. We can also obtain the desired posterior distribution by applying variational Bayes to Lasso regression model (see Algorithm 2 in \cite{tran2021practical}). Algorithm~\ref{alg: update_disjoint} summarizes the posterior updating for the disjoint model. We now present the Thompson sampling algorithm for the disjoint model in Algorithm~\ref{alg: ts_disjoint}. At each round $t$, we generate a sample $\widetilde{\lambda}_i(t)$ from the distribution Gamma($C_i(t),D_i(t)$), set $\widetilde{\sigma}^2_i(t)=\frac{1}{\widetilde{\lambda}_i(t)}$, and generate a sample $\widetilde{\mu}_i(t)$ from the distribution $\mathcal{N}(A_i(t)^{-1}b_i(t), (\widetilde{\lambda}_i A_i(t))^{-1})$ for each arm $i$. Then we play the arm $i$ that maximizes 
{\small
\begin{align*}
    \widetilde{\operatorname{MV}}_i(t) = x_i(t)^{\top}\widetilde{\mu}_i(t)-\rho\widetilde{\sigma}^2_i(t).
\end{align*}
}

\begin{algorithm*}[!ht]
\SetAlgoLined
\SetKwInOut{Input}{input}\SetKwInOut{Output}{output}
\Input{prior parameters ($A_i(t),b_i(t),C_i(t),D_i(t)$), context $x_i(t)$, set of rounds that arm $i$ has been pulled during the first $t-1$ rounds $T_i(t)$; arm to play $a(t)$; and reward sample $r_{a(t)}(t)$.}
\Output{posterior parameters ($A_i(t+1),b_i(t+1),C_i(t+1),D_i(t+1)$) for each arm $i$.}
\For{$i = 1,2, \cdots, K$, $i \neq a(t)$}{$A_i(t+1)=A_i(t), b_i(t+1)=b_i(t)$,$C_i(t+1)=C_i(t), D_i(t+1)=D_i(t)$\;}
$A_{a(t)}(t+1)=A_{a(t)}(t)+x_{a(t)}(t) x_{a(t)}(t)^{\top}$\label{alg: posterior_update_A}\;
$b_{a(t)}(t+1)=b_{a(t)}(t) + x_{a(t)} r_{a(t)}(t)$\label{alg: posterior_update_b}\;
$C_{a(t)}(t+1) = C_{a(t)}(t) + \frac{1}{2}$\; 
$D_{a(t)}(t+1) = D_{a(t)}(t) + \frac{1}{2}[b_{a(t)}(t)^T A_{a(t)}(t)^{-1} b_{a(t)}(t) - b_{a(t)}(t+1)^{\top} A_{a(t)}(t+1)^{-1} b_{a(t)}(t+1) + r_{a(t)}(t)^2]$.
\caption{Posterior updating for the disjoint model at round $t$.}
\label{alg: update_disjoint}
\end{algorithm*}

\begin{algorithm*}[!ht]
\SetAlgoLined
\textbf{initialization}:\\
{pull arm $i$ once at round 0 and observe rewards $r_i(0)$\; 
$A_i(1)=\mathbf{I}_d + x_i(0)x_i(0)^{\top}, b_i(1)=x_i(0)r_i(0), C_i(1)=\frac{1}{2}, D_i(1)=\frac{1}{2}(r_i(0)^2-x_i(0)^{\top}A_i(1)^{-1}x_i(0)), T_i(1)=\{0\}$\;
}
\For{$t=1,2,\cdots,T$}
{
observe $K$ contexts $x_1(t),\cdots,x_K(t) \in \mathbb{R}^{d}$\;
\For{$i=1,2,\cdots,K$}
{
sample $\widetilde{\lambda}_i(t)$ from distribution Gamma($C_i(t),D_i(t)$), set $\widetilde{\sigma}^2_i(t)=\frac{1}{\widetilde{\lambda}_i(t)}$\label{alg: variance_sampling}\;
sample $\widetilde{\mu}_i(t)$ from distribution $\mathcal{N}\left(A_i(t)^{-1}b_i(t), (\widetilde{\lambda}_i(t) A_i(t))^{-1}\right)$\label{alg: mean_sampling}\;
set $\widetilde{\operatorname{MV}}_i(t) = x_i(t)^{\top}\widetilde{\mu}_i(t)-\rho\widetilde{\sigma}^2_i(t)$\;
}
play arm $a(t)=\arg\max_{i \in [K]} \widetilde{\operatorname{MV}}_i(t)$ with ties broken arbitrarily\;
observe reward $r_{a(t)}(t) \sim \nu_{a(t)}\left(x_{a(t)}(t)^{\top} \mu_{a(t)}, \sigma_{a(t)}^2\right)$\;
update ($A_i(t), b_i(t), C_i(t), D_i(t)$) according to Algorithm~\ref{alg: update_disjoint} for each arm $i$\;
set $T_{a(t)}(t+1) = T_{a(t)}(t) \bigcup \{t\}$.
}
\caption{Mean-variance Thompson sampling for the disjoint model (MVTS-D).}
\label{alg: ts_disjoint}
\end{algorithm*}

\section{Regret Analysis}
\label{sec: analysis}
In this section, we present our regret bounds and its derivation for a variant of the proposed MVTS-D algorithm. We first make the following assumptions.
\begin{assumption}~
\begin{itemize}
    \item[(i)] $\eta_i(t):=r_i(t)-x_i(t)^{\top}\mu_i$ is R-sub-Gaussian, i.e., 
    \begin{align*}
        \mathbb{E}[\lambda \exp(\eta_i(t))] \leq \exp(\frac{\lambda^2 R^2}{2}), \forall \lambda \in \mathbb{R}.
    \end{align*}
    \item[(ii)] $\eta_i(t)^2-\sigma_i^2$ is R-sub-Gaussian, i.e.,
    \begin{align*}
        \mathbb{E}[\lambda \exp(\eta_i(t)^2-\sigma_i^2)] \leq \exp(\frac{\lambda^2 R^2}{2}), \forall \lambda \in \mathbb{R}.
    \end{align*}
    \item[(iii)] $\left\|x_{i}(t)\right\| \leq 1,\|\mu\| \leq 1$, $\mid x_i(t)^{\top} \mu_i - x_j(t)^{\top} \mu_j \mid \leq 1$, $\mid \sigma_i^2 - \sigma_j^2 \mid \leq 1$, for $i, j \in [K], i \neq j$, for all $t$.
\end{itemize}
\label{ass: regret}
\end{assumption}

The first and second assumption in Assumption~\ref{ass: regret} are satisfied when the reward distribution is bounded. In case the positive constants $R$ in (i) and (ii) are different, we take $R$ to be the maximum of the two. The third assumption is required to make the regret bound scale-free, and are standard in the literature (see \cite{agrawal2013thompson}). The norms $\|\cdot\|$, unless stated otherwise, are $l_2$-norms. In case $\left\|x_{i}(t)\right\| \leq c_1,\|\mu\| \leq c_2$, $\mid x_i(t)^{\top} \mu_i - x_j(t)^{\top} \mu_j \mid \leq c_3$, $\mid \sigma_i^2 - \sigma_j^2 \mid \leq c_4$ for some constants $c_1,c_2,c_3,c_4>0$, our regret bound would increase by a factor of $c=\max\{c_1,c_2,c_3,c_4\}$. 

Due to the technical difficulty, instead of sampling the regret variance from the posterior Gamma distribution, we propose to sample the regret variance from a Gaussian distribution with a decaying variance term, where the mean of the Gaussian distribution corresponds to the mean of the Gamma distribution. Gaussian sampling enables us to derive the desired concentration and anti-concentration bounds, which are crucial in the regret analysis. Also, similar to \cite{zhu2020thompson}, we sample the regret mean and regret variance from different distributions independently. We summarize this variant of the MVTS algorithm in Algorithm~\ref{alg: MVTS_DN} (see Appendix~\ref{subsec: app_referenced_algorithms} for full algorithm) and name it as MVTS-DN, since it samples the regret variance from a normal distribution. Compared to Algorithm~\ref{alg: ts_disjoint}, Algorithm~\ref{alg: MVTS_DN}
\begin{itemize}
    \item replaces Line~\ref{alg: variance_sampling} by: sample $\widetilde{\sigma}_i^2(t)$ from distribution $\mathcal{N}\left(\frac{D_i(t)}{C_i(t)}, \frac{u^2}{\mid T_i(t)\mid}\right)$;
    \item replaces Line~\ref{alg: mean_sampling} by: sample $\widetilde{\mu}_i(t)$ from distribution $\mathcal{N}\left(A_i(t)^{-1}b_i(t), v^2 A_i(t)^{-1}\right)$.
\end{itemize}

In Algorithm~\ref{alg: MVTS_DN}, $v=R\sqrt{\frac{4}{\epsilon}d\ln\frac{4K}{\delta}}$, $u=8R^2 d\ln\frac{4K}{\delta}\sqrt{\frac{1}{\epsilon}}$, where $0<\delta<1$ is the parameter for confidence level ($1-\delta$), and $0<\epsilon<\frac{1}{2}$ is the parameter that controls the prior variance in the sampling process. A smaller $\epsilon$ leads to a larger prior variance, which encourages more exploration. We first show the main result of the theoretical analysis and discuss the proof of the result later.

\begin{theorem}
Suppose Assumption~\ref{ass: regret} holds. For the contextual MAB problem with $T$ rounds, $K$ arms, $d$-dimensional contexts and linear reward under the mean-variance criterion, the MVTS-DN algorithm achieves a total regret of $O((1+\rho+\frac{1}{\rho}) d \ln T \ln \frac{K}{\delta}\sqrt{d K  T^{1+2\epsilon} \ln \frac{K}{\delta} \frac{1}{\epsilon}})$ that holds with probability $1-\delta$, for any $0<\epsilon<\frac{1}{2}$, $0<\delta<1$.
\label{thm: regret}
\end{theorem}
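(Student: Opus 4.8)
The plan is to follow the Thompson-sampling regret framework of \cite{agrawal2013thompson}, shadowing every step that controls the posterior mean with a parallel step for the posterior variance, since the per-round gap $\Delta_i(t)$ now couples an error in $x_i(t)^\top\mu_i$ with a $\rho$-weighted error in $\sigma_i^2$. Write $\hat\mu_i(t)=A_i(t)^{-1}b_i(t)$ and $\hat\sigma_i^2(t)=D_i(t)/C_i(t)$ for the posterior means that serve as the centers of the two sampling distributions, and take $s_i(t)=\|x_i(t)\|_{A_i(t)^{-1}}$ as the mean-side uncertainty proxy and $1/\sqrt{|T_i(t)|}$ as the variance-side proxy. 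The first block of the proof establishes two families of frequentist confidence events. For the mean, Assumption~\ref{ass: regret}(i) together with the self-normalized martingale bound of \cite{abbasi2011improved} yields $|x_i(t)^\top(\hat\mu_i(t)-\mu_i)|\le \beta_\mu\, s_i(t)$; for the variance, Assumption~\ref{ass: regret}(ii) concentrates $\hat\sigma_i^2(t)$ around $\sigma_i^2$ at rate $\beta_\sigma/\sqrt{|T_i(t)|}$, after accounting for the fact that the residuals entering $D_i(t)$ are formed with $\hat\mu_i$ rather than $\mu_i$. A union bound over the $K$ arms and $T$ rounds makes both hold on an event of probability at least $1-\delta/2$.

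The second block handles the sampled quantities. Because the MVTS-DN algorithm draws $\widetilde\mu_i(t)\sim\mathcal N(\hat\mu_i(t),v^2A_i(t)^{-1})$ and $\widetilde\sigma_i^2(t)\sim\mathcal N(\hat\sigma_i^2(t),u^2/|T_i(t)|)$ \emph{independently}, I can control each by a one-dimensional Gaussian tail: the concentration side gives $|x_i(t)^\top(\widetilde\mu_i(t)-\hat\mu_i(t))|\le C\,v\,s_i(t)\sqrt{\ln(KT/\delta)}$ and $|\widetilde\sigma_i^2(t)-\hat\sigma_i^2(t)|\le C\,(u/\sqrt{|T_i(t)|})\sqrt{\ln(KT/\delta)}$, while the anti-concentration side says each sample lands on its favorable side by one sampling-standard-deviation with constant probability. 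Multiplying the two anti-concentration probabilities (here independence is essential) produces a probability $p$ that $\widetilde{\operatorname{MV}}_{a^*(t)}(t)\ge \operatorname{MV}_{a^*(t)}(t)$, i.e. that the optimal arm looks optimistic. Declaring an arm \emph{saturated} when $\Delta_i(t)$ exceeds a threshold built from $v\,s_i(t)+\rho\,u/\sqrt{|T_i(t)|}$, the optimism event forces the played arm to be either unsaturated or to incur regret comparable to the width of some unsaturated arm, exactly as in the risk-neutral argument but with the width now a sum of a mean part and a $\rho$-weighted variance part.

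The third block converts per-round control into a horizon bound. I would introduce the super-martingale difference $R(t)-\tfrac{C}{p}\big(v\,s_{a(t)}(t)+\rho\,u/\sqrt{|T_{a(t)}(t)|}\big)$, verify the drift condition through the optimism probability $p$, use Assumption~\ref{ass: regret}(iii) to bound the jumps by an absolute constant, and apply Azuma--Hoeffding to pass from expectations to a high-probability statement, spending the remaining $\delta/2$. It then remains to sum the two width series: the elliptical-potential (determinant--trace) lemma gives $\sum_t s_{a(t)}(t)\le C\sqrt{dT\ln T}$, while $\sum_t 1/\sqrt{|T_{a(t)}(t)|}=\sum_i\sum_{n\le N_i}n^{-1/2}\le 2\sum_i\sqrt{N_i}\le 2\sqrt{KT}$ by Cauchy--Schwarz. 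Substituting $v=R\sqrt{(4/\epsilon)d\ln(4K/\delta)}$ and $u=8R^2 d\ln(4K/\delta)\sqrt{1/\epsilon}$, collecting the mean, variance, and cross terms into the $(1+\rho+\tfrac1\rho)$ prefactor, and absorbing the optimism factor $1/p$ (which I expect to scale like $T^{\epsilon}$ and multiply the $\sqrt{T}$ width sums) yields the claimed $O\big((1+\rho+\tfrac1\rho)\,d\ln T\,\ln\tfrac K\delta\sqrt{dK\,T^{1+2\epsilon}\ln\tfrac K\delta\,\tfrac1\epsilon}\big)$.

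I expect the main obstacle to lie entirely on the variance side, for two reasons. First, the frequentist concentration of $\hat\sigma_i^2(t)=D_i(t)/C_i(t)$ is genuinely new: the only hypothesis is that $\eta_i(t)^2-\sigma_i^2$ is sub-Gaussian, yet $D_i(t)$ mixes the martingale $\sum_s\eta_i(s)^2$ with quadratic-form corrections of the type $b_i^\top A_i^{-1}b_i$ that carry the mean-estimation error, so the two error sources must be decoupled carefully before the $1/\sqrt{|T_i(t)|}$ rate emerges. Second, and more delicately, because MVTS-DN samples from \emph{fixed-scale} Gaussians rather than horizon-growing ones, the optimism probability $p$ does not remain constant but decays polynomially across the $T$ rounds; tracking this decay is what forces the $1/\epsilon$ in the sampling scales and produces the $T^{2\epsilon}$ factor, and obtaining the exponent exactly $1+2\epsilon$ rather than something larger is the crux of the argument. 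The independence of the mean and variance samples is precisely what keeps the anti-concentration probability a clean product and prevents these two difficulties from compounding.
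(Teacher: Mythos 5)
Your proposal is correct and follows essentially the same route as the paper's own proof: the same two families of confidence events (self-normalized martingale bound for $\hat\mu_i$, a decoupled martingale-plus-quadratic-form argument for $\hat\sigma_i^2$ at rate $1/\sqrt{|T_i(t)|}$), the same concentration/anti-concentration treatment of the independently sampled $\widetilde\mu_i$ and $\widetilde\sigma_i^2$, the same saturated/unsaturated decomposition feeding a super-martingale plus Azuma--Hoeffding, and the same final width sums $\sum_t s_{a(t)}(t)=O(\sqrt{dKT\ln T})$ and $\sum_t |T_{a(t)}(t)|^{-1/2}=O(\sqrt{KT})$ with the optimism factor $1/p \sim T^{\epsilon}$ producing the $T^{1+2\epsilon}$ inside the square root. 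The two difficulties you flag (decoupling the mean-estimation error inside $D_i(t)$, and the polynomially decaying optimism probability) are exactly the points where the paper's argument departs from the risk-neutral analysis.
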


\begin{remark}
Treating all parameters as constants except the number of rounds $T$ and $\epsilon$, we achieve a regret bound of ${O} (\sqrt{T^{1+2\epsilon}}\ln T)$, which essentially is the same as that in \cite{agrawal2013thompson}. However, when considering the number of arms $K$, confidence level $\delta$ and dimension $d$, compared with \cite{agrawal2013thompson}, here the regret bound has additional terms $\ln \frac{K}{\delta}$ and $d$. This is caused by controlling the estimation error of regret variance, which is more difficult than that of regret mean. This can be seen in Lemma \ref{lem: 1} and \ref{lem: 2}, in which we obtain the constants $l(T)$ and $h(T)$ with different orders of $\ln \frac{K}{\delta}$ and $d$. 
\end{remark}

To prove Theorem \ref{thm: regret}, we follow a similar approach as in \cite{agrawal2013thompson}. Compared with the risk-neutral case in \cite{agrawal2013thompson}, the main difficulty of regret analysis for this risk-averse case arises in the estimation error control of the variance, which appears in our mean-variance objective for arm selection. This difficulty is overcome by sampling the regret variance $\widetilde{\sigma}_i^2(t)$ from a normal distribution instead of the Gamma distribution, as we have argued in the beginning of this section. For ease of exposition, we introduce and summarize some notations below that are relevant to the proofs.

Let the mean parameter estimate be $\hat{\mu}_i(t)=A_i(t)^{-1}b_i(t)$, which is the weighted average of the historical rewards for arm $i$ up to rounds $t-1$. Let the standard deviation of the estimate $x_i(t)^{\top} \hat{\mu}_i(t)$ be $s_{i}(t)=\sqrt{x_{i}(t)^{T} A_i(t)^{-1} x_{i}(t)}$. In the variance sampling, $C_i(t)=\frac{1}{2}\mid T_i(t)\mid$, $D_i(t)=\frac{1}{2}[\sum_{s \in T_i(t)} r_i(s)^2 - b_i(t)^{\top}A_i(t)^{-1}b_i(t)]$. Let the variance parameter estimate be $\hat{\sigma}_i^2(t)=\frac{D_i(t)}{C_i(t)}=\frac{1}{\mid T_i(t)\mid}(\sum_{s \in T_i(t)} r_i(s)^2 - b_i(t)^{\top}A_i(t)^{-1}b_i(t))$.

\begin{definition}
Define the following constants in terms of $T$:
$$
\ell(T)=R\sqrt{d\ln T \ln\frac{4K}{\delta}}+1,
$$
$$
h(T)=4R^2 d\ln\frac{4K}{\delta}\sqrt{\ln T},
$$
$$
g(T)=\sqrt{4 d \ln T \sqrt{Kd}}\cdot v+\ell(T),
$$
$$
q(T)=u\sqrt{2\ln T}+h(T).
$$
\end{definition}

These constants are used throughout the proof.

\begin{definition} \label{def: saturated} The saturated set $S(t)$,
\begin{align*}
    S(t):=\Big\{i \in [K&]: g(T)s_i(t)+\rho q(T) \frac{1}{\sqrt{\mid T_i(t) \mid}} \\
    & \leq \ell(T) s_{a^{*}(t)}(t) + \rho h(T) \frac{1}{\sqrt{\mid T_i(t) \mid}}\Big\}.
\end{align*}
An arm $i$ is called saturated at round $t$ if $i \in S(t)$, and unsaturated if $i \notin S(t)$.
\end{definition}

For an saturated arm $i$, the standard deviation $s_i(t)$ is small and the number of pulls $\mid T_i(t) \mid$ is large. Hence, the estimates of the mean and variance parameters constructed using the previous rewards are quite accurate. The algorithm can easily tell whether it is optimal arm or not. At last, let the filtration $\mathcal{F}_{t-1}$ be the $\sigma$-algebra generated by $\mathcal{H}_{t-1} \bigcup \{x_i(t)\}_{i \in [K]}$. 
 
\subsection{Proof Outline}
We present the proof outline here. We first derive confidence bands for mean and variance parameter estimates $\hat{\mu}_i(t)$, $\hat{\sigma}_i^2(t)$, for all $i$. Then we derive confidence bands for sampled regret mean and sampled regret variance $\widetilde{\mu}_i(t)$ and $\widetilde{\sigma}_i^2(t)$, for all $i$. Using these bands and the triangle inequality, we have $ MV_{a^*(t)}(t) - MV_{a(t)}(t) \le \widetilde{MV}_{a^*(t)}(t) - \widetilde{MV}_{a(t)}(t) + g(T)(s_{a^*(t)}(t) + s_{a(t)}(t))+ \rho q(T)\frac{1}{\sqrt{\mid T_{a^*(t)}(t)\mid}}$. Since $a(t)$ is the arm with largest $\widetilde{MV}$, the regret at any time $t$ can be bounded by $g(T)(s_{a^*(t)}(t) + s_{a(t)}(t))+ \rho q(T)(\frac{1}{\sqrt{\mid T_{a^*(t)}(t)\mid}} +\frac{1}{\sqrt{\mid T_{a(t)}(t)\mid}})$, where the four terms represent the confidence bands for arm $a^*(t)$ and $a(t)$. Then, we can bound the total regret if we can bound $\sum_{t=1}^T s_{a(t)}(t) $, $\sum_{t=1}^T \frac{1}{\sqrt{\mid T_{a(t)}(t)}}$, $\sum_{t=1}^T s_{a^*(t)}(t) $ and $\sum_{t=1}^T \frac{1}{\sqrt{\mid T_{a^*(t)}(t)}}$, respectively. For the first two terms, we have $\sum_{t=1}^T s_{a(t)}(t) = O(\sqrt{Td\ln T })$ and $\sum_{t=1}^T \frac{1}{\sqrt{\mid T_{a(t)}(t)}} = O(\sqrt{T\ln T })$. The challenge is left to bound  $\sum_{t=1}^T s_{a^*(t)}(t) $ and $\sum_{t=1}^T \frac{1}{\sqrt{\mid T_{a^*(t)}(t)}}$. 

For this purpose, we define the saturated and unsaturated arms at any time as in Definition~\ref{def: saturated}. Then, if an arm $a(t) \not \in S(t)$ is played at time $t$, we can bound $s_{a^*(t)}(t) $ and $\frac{1}{\sqrt{\mid T_{a^*(t)}(t)}} $ by $s_{a(t)}(t) $ and $\frac{1}{\sqrt{\mid T_{a(t)}(t)}} $ multiplied by some factors according to the definition of unsaturated arms. For saturated arms in $S(t)$, we bound the probability of playing such arms at any time $t$ by the probability of playing the optimal arm at time $t$, $a^*(t)$ multiplied by some factor, given the filtration $\mathcal{F}_{t-1}$. This is helpful since again we can shift  those terms indexed with $a^*(t)$ to terms indexed with $a(t)$. 

With all the observations, we establish a super-martingale difference, $Y_t$, with respect to the regret as shown in Lemma \ref{lem: 8}. Applying the Azuma-Hoeffding inequality for martingales and along with $\sum_{t=1}^T s_{a(t)}(t) = O(\sqrt{Td\ln T })$ and $\sum_{t=1}^T \frac{1}{\sqrt{\mid T_{a(t)}(t)}} = O(\sqrt{T\ln T })$, we obtain the high probability regret bound in Theorem \ref{thm: regret}.

\subsection{Formal Proof of Theorem \ref{thm: regret}}
\begin{lemma}[\cite{abbasi2011improved}, Theorem 1]\label{lem: self-normalized}
Let $\left\{\mathcal{F}_{t}\right\}_{t=0}^{\infty}$ be a filtration. Let $\left\{\eta_{t}\right\}_{t=1}^{\infty}$ be a real-valued stochastic process such that $\eta_{t}$ is $\mathcal{F}_{t}$-measurable and $\eta_{t}$ is conditionally $R$-sub-Gaussian for some $R \geq 0$.
Let $\left\{m_{t}\right\}_{t=1}^{\infty}$ be $\mathbb{R}^{d}$-valued stochastic process such that $m_{t}$ is $\mathcal{F}_{t-1}$-measurable. For any $t \geq 0$, define
$$
\bar{M}_{t}=I_d+\sum_{s=1}^{t} m_{s} m_{s}^{\top} \quad \xi_{t}=\sum_{s=1}^{t} \eta_{s} m_{s} .
$$
Then, for any $\delta>0$, with probability at least $1-\delta$, for all $t \geq 0$,
$$
\left\|\xi_{t}\right\|_{\bar{M}_{t}^{-1}}^{2} \leq 2 R^{2} \log \left(\frac{\operatorname{det}\left(\bar{M}_{t}\right)^{\frac{1}{2}} }{\delta}\right),
$$
where  $\left\|\xi_{t}\right\|_{\bar{M}_{t}^{-1}}=\sqrt{\xi_{t}^{T} \bar{M}_{t}^{-1} \xi_{t}}$.
 \end{lemma}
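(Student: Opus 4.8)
The plan is to prove this via the \emph{method of mixtures}: I would construct a nonnegative supermartingale whose value at time $t$ directly encodes the self-normalized quantity $\|\xi_t\|_{\bar M_t^{-1}}^2$, and then apply a maximal inequality to convert this into a bound holding uniformly over all $t \ge 0$.

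First, for each fixed $\lambda \in \mathbb{R}^d$ I would define the scalar process
$$
M_t^\lambda = \exp\left(\sum_{s=1}^t \Big[\lambda^\top m_s \eta_s - \tfrac{R^2}{2}(\lambda^\top m_s)^2\Big]\right),
$$
with $M_0^\lambda = 1$. Since $m_s$ is $\mathcal{F}_{s-1}$-measurable, conditioning on $\mathcal{F}_{s-1}$ makes $\lambda^\top m_s$ a constant, so the conditional $R$-sub-Gaussianity of $\eta_s$ gives $\mathbb{E}[\exp(\lambda^\top m_s \eta_s)\mid \mathcal{F}_{s-1}] \le \exp(\tfrac{R^2}{2}(\lambda^\top m_s)^2)$, which shows $\mathbb{E}[M_t^\lambda \mid \mathcal{F}_{t-1}] \le M_{t-1}^\lambda$. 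Thus each $M_t^\lambda$ is a nonnegative supermartingale with $\mathbb{E}[M_t^\lambda] \le 1$.

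Second, the key step is to mix these over $\lambda$ against a Gaussian prior $h = \mathcal{N}(0, R^{-2} I_d)$, setting $M_t = \int_{\mathbb{R}^d} M_t^\lambda \, dh(\lambda)$. By Tonelli's theorem (the integrand is nonnegative) the conditional expectation commutes with the $\lambda$-integral, so $M_t$ is again a nonnegative supermartingale with $\mathbb{E}[M_t]\le 1$. Writing $V_t = \sum_{s=1}^t m_s m_s^\top = \bar M_t - I_d$, the exponent becomes the quadratic form $\lambda^\top \xi_t - \tfrac{R^2}{2}\lambda^\top \bar M_t \lambda$ after absorbing the prior's $\exp(-\tfrac{R^2}{2}\|\lambda\|^2)$ term, so completing the square in $\lambda$ and carrying out the Gaussian integral yields the closed form
$$
M_t = \det(\bar M_t)^{-1/2}\exp\left(\tfrac{1}{2R^2}\|\xi_t\|_{\bar M_t^{-1}}^2\right),
$$
where the $\det(\bar M_t)^{-1/2}$ factor emerges from the Gaussian normalization precisely because the prior covariance $R^{-2}I_d$ supplies the $I_d$ in $\bar M_t = I_d + V_t$, and all remaining $(R^2/2\pi)^{d/2}$ constants cancel.

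Finally, I would invoke Ville's maximal inequality for nonnegative supermartingales: $\mathbb{P}(\sup_{t\ge 0} M_t \ge 1/\delta) \le \delta\, \mathbb{E}[M_0] = \delta$. On the complementary event $M_t < 1/\delta$ for all $t$, taking logarithms of the displayed closed form and rearranging gives exactly $\|\xi_t\|_{\bar M_t^{-1}}^2 \le 2R^2\log(\det(\bar M_t)^{1/2}/\delta)$ for all $t\ge 0$, as claimed. The main obstacle is the mixture step: a fixed $\lambda$ controls only the scalar projection $\lambda^\top\xi_t$ and cannot be union-bounded over the uncountable family of directions, whereas integrating against the Gaussian prior simultaneously converts the vector deviation into the self-normalized norm and produces the $\det(\bar M_t)^{1/2}$ factor automatically. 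Passing from fixed-time control to an almost-sure bound uniform in $t$ is the other delicate point, handled either by Ville's inequality directly or by an equivalent stopping-time and supermartingale-convergence argument.
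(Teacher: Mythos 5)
Your proof is correct: the fixed-$\lambda$ supermartingale, the Gaussian mixture with prior $\mathcal{N}(0,R^{-2}I_d)$ yielding the closed form $M_t=\det(\bar M_t)^{-1/2}\exp\bigl(\frac{1}{2R^2}\|\xi_t\|^2_{\bar M_t^{-1}}\bigr)$, and the Ville/stopping-time step all check out, including the cancellation of normalizing constants. Note that the paper itself offers no proof of this lemma---it is imported verbatim from \cite{abbasi2011improved} (Theorem 1)---and your argument is essentially the original method-of-mixtures proof given in that reference (with the general $V$ there specialized to $I_d$), so it takes the same approach as the source.
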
 
The first two lemmas, Lemma \ref{lem: 1} and Lemma \ref{lem: 2}, upper bound the probability of estimation error of mean and variance around their true value. 

\begin{lemma}[\cite{agrawal2013thompson}, Lemma 1]\label{lem: 1}
Define $E^{\mu}(t)$ as the event that $x_i(t)^{\top}\hat{\mu}_i(t)$ is concentrated around its mean for any arm $i$, i.e.,
$$
E^{\mu}(t):=\{\mid x_i(t)^{\top}\hat{\mu}_i(t) - x_i(t)^{\top} \mu_i(t) \mid \leq \ell(T) s_i(t), \forall i \in [K] \}.
$$
Then with probability at least $1 - \frac{\delta}{4}$, $E^{\mu}(t)$ holds true for all $t$ and $0<\delta<1$.
\end{lemma}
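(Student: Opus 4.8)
The plan is to reduce the statement to the self-normalized tail bound of Lemma~\ref{lem: self-normalized}, applied separately to each arm and then combined by a union bound over arms. First I would write each observed reward as $r_i(s)=x_i(s)^{\top}\mu_i+\eta_i(s)$, with $\eta_i(s)$ the $R$-sub-Gaussian noise of Assumption~\ref{ass: regret}(i), and substitute this into $b_i(t)=\sum_{s\in T_i(t)}x_i(s)r_i(s)$. Setting $\xi_i(t):=\sum_{s\in T_i(t)}x_i(s)\eta_i(s)$ and using $A_i(t)=\mathbf{I}_d+\sum_{s\in T_i(t)}x_i(s)x_i(s)^{\top}$, this yields the decomposition $\hat\mu_i(t)-\mu_i = A_i(t)^{-1}b_i(t)-\mu_i = -A_i(t)^{-1}\mu_i + A_i(t)^{-1}\xi_i(t)$, separating a deterministic regularization bias from a stochastic noise term.

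Next I would bound each term after pairing with $x_i(t)$ via the Cauchy--Schwarz inequality in the inner product $\langle a,b\rangle = a^{\top}A_i(t)^{-1}b$, for which $\|x_i(t)\|_{A_i(t)^{-1}}=s_i(t)$ by definition. For the bias term, $\mid x_i(t)^{\top}A_i(t)^{-1}\mu_i\mid\le s_i(t)\,\|\mu_i\|_{A_i(t)^{-1}}\le s_i(t)$, since $A_i(t)\succeq \mathbf{I}_d$ forces $A_i(t)^{-1}\preceq \mathbf{I}_d$ and $\|\mu_i\|\le1$ by Assumption~\ref{ass: regret}(iii); this contributes the additive $1$ appearing in $\ell(T)$. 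For the noise term, $\mid x_i(t)^{\top}A_i(t)^{-1}\xi_i(t)\mid\le s_i(t)\,\|\xi_i(t)\|_{A_i(t)^{-1}}$, and I would control $\|\xi_i(t)\|_{A_i(t)^{-1}}$ by Lemma~\ref{lem: self-normalized} with $m_s=x_i(s)$, $\bar M_t=A_i(t)$, and $\xi_t=\xi_i(t)$, giving $\|\xi_i(t)\|_{A_i(t)^{-1}}^2\le 2R^2\log(\det(A_i(t))^{1/2}/\delta')$ with probability $1-\delta'$. Since every $\|x_i(s)\|\le1$, all eigenvalues of $A_i(t)$ are at least $1$ and $\operatorname{tr}A_i(t)\le d+T$, so AM--GM gives $\det A_i(t)\le (1+T/d)^d$ and hence $\|\xi_i(t)\|_{A_i(t)^{-1}}\le R\sqrt{d\ln(1+T/d)+2\log(1/\delta')}$.

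Choosing $\delta'=\delta/(4K)$ and taking a union bound over the $K$ arms produces the probability $1-\delta/4$, and combining the bias and noise estimates gives $\mid x_i(t)^{\top}\hat\mu_i(t)-x_i(t)^{\top}\mu_i\mid\le s_i(t)\bigl(1+R\sqrt{d\ln(1+T/d)+2\log(4K/\delta)}\bigr)$, which is at most $\ell(T)s_i(t)$ once the logarithmic terms are absorbed into the definition of $\ell(T)$ (using $\ln(4K/\delta)\ge1$, so that $d\ln T\,\ln(4K/\delta)$ dominates both summands for $T$ not too small). The step I expect to be the main obstacle is the rigorous application of Lemma~\ref{lem: self-normalized}: because the arms are chosen adaptively, $T_i(t)$ is a \emph{random} set of pull times, so one must verify the measurability hypotheses of that lemma---that $x_i(s)$ is $\mathcal{F}_{s-1}$-measurable (the context is revealed before the pull), while $\eta_i(s)$ is $\mathcal{F}_s$-measurable and conditionally $R$-sub-Gaussian---in order for the self-normalized martingale bound to apply to the adaptively indexed sum $\xi_i(t)$. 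Crucially, Lemma~\ref{lem: self-normalized} holds uniformly over all $t\ge0$, which is exactly what permits the conclusion of Lemma~\ref{lem: 1} to hold simultaneously for every round $t$ without an additional union bound over time.
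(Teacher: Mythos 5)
Your proof is correct and follows essentially the same route as the paper: the paper does not reprove this lemma but cites Lemma 1 of \cite{agrawal2013thompson}, whose argument is exactly your decomposition into regularization bias plus self-normalized noise, bounded via Lemma~\ref{lem: self-normalized} and a determinant--trace estimate, with the union bound over the $K$ arms ($\delta'=\delta/(4K)$) supplying the $\ln\frac{4K}{\delta}$ appearing in $\ell(T)$. The measurability obstacle you flag is resolved exactly as the paper does in its own proof of the variance analogue (Lemma~\ref{lem: 2}): zero-pad the processes, taking $m_s=x_i(s)\mathbf{1}\{a(s)=i\}$ and $\eta_s=\eta_i(s)\mathbf{1}\{a(s)=i\}$, and work with the filtration augmented by the chosen action, so that Lemma~\ref{lem: self-normalized} applies to the sum over all $s\le t-1$ rather than over the random index set $T_i(t)$.
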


\begin{lemma}\label{lem: 2}
Define $E^{\sigma}(t)$ as the event that $\hat{\sigma}_i^2(t)$ is concentrated around the true variance $\sigma_i^2$ for any arm $i$, i.e.,
$$
E^{\sigma}(t):=\{\mid \hat{\sigma}_i^2(t) - \sigma_i^2 \mid \leq h(T) \frac{1}{\sqrt{\mid T_i(t) \mid}}, \forall i \in [K] \}.
$$
Then with probability at least $1-\frac{\delta}{4}$, $E^{\sigma}(t)$ holds true for all $t$ and $0<\delta<1$.
\end{lemma}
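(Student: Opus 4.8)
\emph{Proof proposal.} The plan is to follow the template of Lemma~\ref{lem: 1}, but to account for the fact that $\hat\sigma_i^2(t)$ is a \emph{quadratic} functional of the rewards; this forces us to control a squared-noise martingale and a self-normalized quadratic form in addition to the linear term handled in Lemma~\ref{lem: 1}. Writing $r_i(s)=x_i(s)^\top\mu_i+\eta_i(s)$, and setting $\xi_i(t):=\sum_{s\in T_i(t)}x_i(s)\eta_i(s)$ and $M_i(t):=A_i(t)-\mathbf I_d$, I would substitute $b_i(t)=M_i(t)\mu_i+\xi_i(t)$ into $\mid T_i(t)\mid\hat\sigma_i^2(t)=\sum_{s\in T_i(t)}r_i(s)^2-b_i(t)^\top A_i(t)^{-1}b_i(t)$ and simplify using $A_i(t)-M_i(t)=\mathbf I_d$ (equivalently $\mathbf I_d-M_i(t)A_i(t)^{-1}=A_i(t)^{-1}$) to reach the decomposition
\begin{align*}
\mid T_i(t)\mid\big(\hat\sigma_i^2(t)-\sigma_i^2\big)
&=\sum_{s\in T_i(t)}\big(\eta_i(s)^2-\sigma_i^2\big)
+\mu_i^\top M_i(t)A_i(t)^{-1}\mu_i \\
&\quad+2\,\mu_i^\top A_i(t)^{-1}\xi_i(t)
-\xi_i(t)^\top A_i(t)^{-1}\xi_i(t).
\end{align*}
The lemma then reduces to bounding each of the four terms by a multiple of $\sqrt{\mid T_i(t)\mid}$, since dividing through by $\mid T_i(t)\mid$ yields the target $\frac{1}{\sqrt{\mid T_i(t)\mid}}$ rate.

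For the squared-noise term, I would invoke Assumption~\ref{ass: regret}(ii): since $\mathbb E[\eta_i(s)^2-\sigma_i^2\mid\mathcal F_{s-1}]=0$ and $\eta_i(s)^2-\sigma_i^2$ is conditionally $R$-sub-Gaussian, $\{\eta_i(s)^2-\sigma_i^2\}$ is a sub-Gaussian martingale-difference sequence. Applying Lemma~\ref{lem: self-normalized} with scalar regressors $m_s\equiv1$ (or, equivalently, an Azuma--Hoeffding bound with a union bound over the number of pulls) bounds this term by $O(R\sqrt{\mid T_i(t)\mid\,(\ln\frac{K}{\delta}+\ln T)})$ uniformly in $t$; the self-normalized form is what supplies uniformity in $t$ without paying extra for the random pull count $\mid T_i(t)\mid$. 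This is the term that produces the $\sqrt{\ln T}$ appearing in $h(T)$.

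For the three remaining (bias) terms I would argue as follows. The deterministic term satisfies $\mu_i^\top M_i(t)A_i(t)^{-1}\mu_i=\mu_i^\top(\mathbf I_d-A_i(t)^{-1})\mu_i\in[0,\|\mu_i\|^2]\subseteq[0,1]$ because $\mathbf 0\preceq A_i(t)^{-1}\preceq\mathbf I_d$ and $\|\mu_i\|\le1$ by Assumption~\ref{ass: regret}(iii), so it contributes at most $1\le\sqrt{\mid T_i(t)\mid}$. Both cross terms are governed by the self-normalized quantity $\|\xi_i(t)\|_{A_i(t)^{-1}}^2=\xi_i(t)^\top A_i(t)^{-1}\xi_i(t)$, which Lemma~\ref{lem: self-normalized} bounds by $2R^2\log(\det(A_i(t))^{1/2}/\delta')$; combined with $\det A_i(t)\le(1+\mid T_i(t)\mid/d)^d$ (from $\|x_i(s)\|\le1$) this is $O(R^2 d\ln T+R^2\ln\frac{K}{\delta})$. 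Hence $\xi_i(t)^\top A_i(t)^{-1}\xi_i(t)$ is bounded directly, and the cross term is handled by Cauchy--Schwarz in the $A_i(t)^{-1}$ inner product together with $\|\mu_i\|_{A_i(t)^{-1}}\le\|\mu_i\|\le1$, giving $\mid2\mu_i^\top A_i(t)^{-1}\xi_i(t)\mid\le 2\|\xi_i(t)\|_{A_i(t)^{-1}}=O(R\sqrt{d\ln T+\ln\frac{K}{\delta}})$. Finally I would take a union bound over the $K$ arms, splitting the failure budget $\delta/4$ across the self-normalized applications, and verify that after dividing by $\mid T_i(t)\mid$ the sum of the four contributions is at most $h(T)\frac{1}{\sqrt{\mid T_i(t)\mid}}$ with $h(T)=4R^2 d\ln\frac{4K}{\delta}\sqrt{\ln T}$.

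I expect the main obstacle to be the constant bookkeeping that makes the stated $h(T)$ the correct envelope. The subtle point is that the quadratic form $\xi_i(t)^\top A_i(t)^{-1}\xi_i(t)$ is naively $O(R^2 d\ln T)$, which looks like it would break the $\sqrt{\ln T}$ rate; the resolution is that this term is divided by $\mid T_i(t)\mid$ rather than $\sqrt{\mid T_i(t)\mid}$, and since $\log\det A_i(t)/\sqrt{\mid T_i(t)\mid}$ is uniformly $O(\sqrt d)$ over $\mid T_i(t)\mid\ge1$, the quadratic and cross terms are in fact subdominant. Thus the $\sqrt{\ln T}$ in $h(T)$ genuinely originates from the squared-noise martingale, and the remaining effort is to confirm that every numerical constant and every $\ln\frac{K}{\delta}$ factor from the union bound is absorbed into the leading $4R^2 d\ln\frac{4K}{\delta}\sqrt{\ln T}$. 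A secondary point to check is the conditionally sub-Gaussian martingale structure of $\{\eta_i(s)^2-\sigma_i^2\}$ under the adaptively chosen pull times, which is precisely where the self-normalized inequality (rather than a fixed-sample bound) is essential.
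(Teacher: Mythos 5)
Your proposal is correct and follows essentially the same route as the paper's proof: your four-term decomposition (obtained via $b_i(t)=M_i(t)\mu_i+\xi_i(t)$ and the identity $\mathbf I_d-M_i(t)A_i(t)^{-1}=A_i(t)^{-1}$) is term-for-term the paper's decomposition, and you handle each piece exactly as the paper does—the self-normalized inequality of Lemma~\ref{lem: self-normalized} with scalar regressors for the squared-noise martingale, the same lemma for $\|\xi_i(t)\|_{A_i(t)^{-1}}^2$, Cauchy--Schwarz with $\|\mu_i\|\le 1$ for the cross term, the trivial bound for the deterministic bias, and a union bound over arms with budget $\delta'=\delta/(4K)$. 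No gaps; your closing observations about which term sources the $\sqrt{\ln T}$ factor and why the quadratic form is subdominant are consistent with the paper's accounting.
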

\begin{proof}
Recall that $b_i(t)=\sum_{s \in T_i(t)} x_i(s)r_i(s)$. We have
\begin{align}
    & \hat{\sigma}_i^2(t) - \sigma_i^2 \nonumber\\
     =& \frac{1}{\mid T_i(t) \mid} \Big[\sum_{s \in T_i(t)} r_i(s)^2 - b_i(t)^{\top}A_i(t)^{-1}b_i(t)\Big]-\sigma_i^2\nonumber\\
     =&\frac{1}{\mid T_i(t) \mid} \Big[\sum_{s \in T_i(t)} r_i(s)^2 - \sum_{s \in T_i(t)} x_i(s)^{\top}r_i(s) A_i(t)^{-1}\nonumber\\
    & ~~~~~~~~~~~~~~~~\sum_{s \in T_i(t)} x_i(s)r_i(s) \Big]-\sigma_i^2\nonumber\\
     =& \frac{1}{\mid T_i(t) \mid} \Big[\sum_{s \in T_i(t)}(r_i(s)-x_i(s)^{\top}\mu_i)^2 - \sigma_i^2 \Big]\nonumber\\
    & + \frac{1}{\mid T_i(t) \mid} \Big[2 \sum_{s \in T_i(t)} r_i(s)x_i(s)^{\top}\mu_i - \sum_{s \in T_i(t)} (x_i(s)^{\top}\mu_i)^2\Big]\nonumber\\
    & - \frac{1}{\mid T_i(t) \mid} \Big[\sum_{s \in T_i(t)} x_i(s)(r_i(s)-x_i(s)^{\top}\mu_i)\Big]^{\top}A_i(t)^{-1}\nonumber\\
    & ~~~~~~~~~~~~~~\Big[\sum_{s \in T_i(t)} x_i(s)(r_i(s)-x_i(s)^{\top}\mu_i)\Big]\nonumber\\
    & - \frac{2}{\mid T_i(t) \mid} \Big[\mu_i^{\top} \sum_{s \in T_i(t)} x_i(s)x_i(s)^{\top}A_i(t)^{-1}\nonumber \\
    &~~~~~~~~~~~~~~~~~~~\sum_{s \in T_i(t)} x_i(s)r_i(s)\Big]\label{eq: lemma2_1}\\
    & + \frac{1}{\mid T_i(t) \mid}\Big[\mu_i^{\top} \sum_{s \in T_i(t)}x_i(s)x_i(s)^{\top}A_i(t)^{-1}\nonumber\\
    &~~~~~~~~~~~~~~~~~~~\sum_{s \in T_i(t)}x_i(s)x_i(s)^{\top}\mu_i \Big].\label{eq: lemma2_2}
\end{align}
Recall that $A_i(t)=\mathbf{I}_d + \sum_{s \in T_i(t)}x_i(s)x_i(s)^{\top}$. Then
\begin{align*}
    \eqref{eq: lemma2_1}&=\frac{2}{\mid T_i(t)\mid}\Big[\mu_i^{\top}\sum_{s \in T_i(t)}x_i(s)r_i(s)-\mu_i^{\top}A_i(t)^{-1}\\
    & ~~~~~~~~~~~~~~~~~~~\sum_{s \in T_i(t)}x_i(s)r_i(s) \Big].
\end{align*}
\begin{align*}
    \eqref{eq: lemma2_2}&=\frac{1}{\mid T_i(t)\mid}\Big[\mu_i^{\top}\sum_{s \in T_i(t)}(x_i(s)^{\top}\mu_i)^2 -\mu_i^{\top}A_i(t)^{-1}\\
    & ~~~~~~~~~~~~~~~~~~~\sum_{s \in T_i(t)}x_i(s)x_i(s)^{\top}\mu_i \Big].
\end{align*}
Then we obtain
\begin{align}
    & \hat{\sigma}_i^2(t) - \sigma_i^2 \nonumber\\
     =& \frac{1}{\mid T_i(t) \mid} \Big[\sum_{s \in T_i(t)}(r_i(s)-x_i(s)^{\top}\mu_i)^2 - \sigma_i^2 \Big]\label{eq: lemma2_3}\\
    & - \frac{1}{\mid T_i(t) \mid} \Big[\sum_{s \in T_i(t)} x_i(s)(r_i(s)-x_i(s)^{\top}\mu_i)\Big]^{\top}A_i(t)^{-1}\nonumber\\
    & ~~~~~~~~~~~~~~\Big[\sum_{s \in T_i(t)} x_i(s)(r_i(s)-x_i(s)^{\top}\mu_i)\Big]\label{eq: lemma2_4}\\
    & + \frac{2}{\mid T_i(t) \mid} \Big[\mu_i^{\top}A_i(t)^{-1}\sum_{s \in T_i(t)} x_i(s)(r_i(s)-x_i(s)^{\top}\mu_i)\Big]\label{eq: lemma2_5}\\
    & + \frac{1}{\mid T_i(t) \mid} \mu_i^{\top}\mu_i - \frac{1}{\mid T_i(t) \mid}\mu_i^{\top}A_i(t)^{-1}\mu_i\nonumber.
\end{align}
To bound \eqref{eq: lemma2_3}, we apply Lemma \ref{lem: self-normalized}. Let the filtration $\mathcal{F}_{t-1}'$ be the $\sigma$-algebra generated by $\mathcal{H}_{t-1} \bigcup \{x_i(t)\}_{i \in [K]} \bigcup a(t)$. Let
$$
\eta_i(t)=\left\{ \begin{array}{rcl} (r_i(t)-x_i(t)^{\top}\mu_i)^2-\sigma_i^2 & \mbox{if} & a(t)=i \\ 0 & \mbox{if} & a(t) \neq i
\end{array}\right. .
$$
Then $\eta_i(t)$ is $\mathcal{F}_{t}'$-measurable. Let
$$
m_i(t)=\left\{ \begin{array}{rcl} 1 & \mbox{if} & a(t)=i \\ 0 & \mbox{if} & a(t) \neq i
\end{array}\right. .
$$
Then $m_i(t)$ is $\mathcal{F}_{t-1}'$-measurable. Let
\begin{align*}
    \zeta_i(t) & =\sum_{s=1}^{\top}m_i(t)\eta_i(t)\\
    & = \sum_{s \in T_i(t)}[(r_i(s)-x_i(s)^{\top}\mu_i)^2-\sigma_i^2].
\end{align*}

Lemma \ref{lem: self-normalized} implies that with probability at least $1-\delta'$,
$$
\frac{1}{\sqrt{\mid T_i(t) \mid}}\mid \zeta_i(t)\mid \leq R\sqrt{\ln\frac{\mid T_i(t)\mid}{\delta'^2}},
$$
therefore we have 
$$
\mid \eqref{eq: lemma2_3} \mid \leq \frac{1}{\sqrt{\mid T_i(t)\mid}}R\sqrt{\ln\frac{\mid T_i(t)\mid}{\delta'^2}}.
$$
\eqref{eq: lemma2_4} is bounded similarly using Lemma \ref{lem: self-normalized}. Let
$$
\eta_i(t)=\left\{ \begin{array}{rcl} r_i(t)-x_i(t)^{\top}\mu_i & \mbox{if} & a(t)=i \\ 0 & \mbox{if} & a(t) \neq i
\end{array}\right. ,
$$
$$
m_i(t)=\left\{ \begin{array}{rcl} x_i(t) & \mbox{if} & a(t)=i \\ 0 & \mbox{if} & a(t) \neq i
\end{array}\right. ,
$$
$$
\xi_i(t)=\sum_{s=1}^{t}m_i(t)\eta_i(t)=\sum_{s \in T_i(t)} x_i(s)(r_i(s)-x_i(t)^{\top}\mu_i).
$$
Note that $\operatorname{det}\left({A}_{i}(t)\right)  \leq (T_i(t))^d$. For $d \geq 2$, Lemma \ref{lem: self-normalized} implies that with probability at least $1-\delta'$,
\begin{align*}
    \mid \eqref{eq: lemma2_4} \mid & = \frac{1}{\mid T_i(t) \mid}\left\|\xi_i(t)\right\|^2_{A_{i}(t)^{-1}}\\
    & \leq \frac{1}{\mid T_i(t) \mid} R^2 d \ln \left(\frac{\mid T_i(t)\mid}{\delta^{\prime}}\right).
\end{align*}
Similarly, we have 
\begin{align*}
    \mid \eqref{eq: lemma2_5} \mid & \leq \frac{2}{\mid T_i(t)\mid} \left\|\mu_{i}(t)\right\|^2_{A_{i}(t)^{-1}} \left\|\xi_{i}(t)\right\|^2_{A_{i}(t)^{-1}}\\
    & \leq \frac{2}{\mid T_i(t)\mid} R\sqrt{d\ln\frac{\mid T_i(t)\mid}{\delta'}}.
\end{align*}
For the last two terms, we have
$$
\mid \frac{1}{\mid T_i(t)\mid} \mu_i^{\top}\mu_i \mid \leq \frac{1}{\mid T_i(t)\mid},
$$
$$
\mid \frac{1}{\mid T_i(t)\mid} \mu_i^{\top}A_i(t)^{-1}\mu_i \mid \leq \frac{1}{\mid T_i(t)\mid}.
$$
Assume $R\geq1$. Then with probability at least $1-2\delta'$, we have
\begin{align*}
    & \mid \hat{\sigma}_i^2(t) - \sigma_i^2 \mid \\
     \leq& \frac{1}{\sqrt{\mid T_i(t) \mid}} R \sqrt{\ln\frac{\mid T_i(t)\mid}{\delta'^2}} + \frac{1}{\mid T_i(t)\mid}R^2d\ln\frac{\mid T_i(t)\mid}{\delta'}\\
    & +\frac{2}{\mid T_i(t)\mid}R\sqrt{d\ln\frac{\mid T_i(t)\mid}{\delta'}}+\frac{2}{\mid T_i(t)\mid}\\
    \leq & \frac{1}{\sqrt{\mid T_i(t)\mid}} 4R^2d\ln\frac{1}{\delta'}\sqrt{\ln\mid T_i(t)\mid}.
\end{align*}
Taking $\delta'=\frac{\delta}{4K}$, we have
\begin{align*}
     & \mid \hat{\sigma}_i^2(t) - \sigma_i^2 \mid \\
      \leq&  \frac{1}{\sqrt{\mid T_i(t)\mid}} 4R^2d\ln\frac{4K}{\delta}\sqrt{\ln\mid T_i(t)\mid}\\
      \leq&  \frac{1}{\sqrt{\mid T_i(t)\mid}} 4R^2d\ln\frac{4K}{\delta}\sqrt{\ln\mid T\mid}\\
      :=& h(T) \frac{1}{\sqrt{\mid T_i(t)\mid}}.
\end{align*}
Then with probability at least $1-\frac{\delta}{4K}$, $\mid \hat{\sigma}_i^2(t) - \sigma_i^2 \mid  \leq h(T) \frac{1}{\sqrt{\mid T_i(t)\mid}}$ holds $\forall t\geq 1$. Using a union bound we obtain with probability at least $1-\frac{\delta}{4}$, $E^\sigma(t)$ holds $\forall t\geq1$.
\end{proof}

Lemma \ref{lem: 3} and Lemma \ref{lem: 4} provide concentration bounds for the posterior samples of regret mean and regret variance around their estimates, respectively.
\begin{lemma}[\cite{agrawal2013thompson}, Lemma 1]\label{lem: 3}
Define $E^{\tilde{\mu}}(t)$ as the event that $x_i(t)^{\top}\widetilde{\mu}_i(t)$ is concentrated around $x_i(t)^{\top}\hat{\mu}_i(t)$ for any arm $i$, i.e.,
\begin{align*}
    E^{\tilde{\mu}}(t):=\Big\{ & \mid x_i(t)^{\top}\widetilde{\mu}_i(t) - x_i(t)^{\top} \hat{\mu}_i(t) \mid \\
    & \leq \sqrt{4d\ln T \sqrt{Kd}} \cdot v \cdot s_i(t), \forall i \in [K] \Big\}.
\end{align*}Then  $\mathbb{P}(E^{\tilde{\mu}}(t) \mid \mathcal{F}_{t-1}) \ge 1 - \frac{1}{T^2}$.
\end{lemma}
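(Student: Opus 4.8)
The plan is to exploit the fact that, conditioned on $\mathcal{F}_{t-1}$, the only randomness in $x_i(t)^{\top}\widetilde{\mu}_i(t)$ comes from the fresh Gaussian draw of $\widetilde{\mu}_i(t)$, so that the quantity of interest is exactly a univariate Gaussian whose tail I can control by a standard concentration inequality, after which a union bound over the $K$ arms delivers the claimed $1-\frac{1}{T^2}$ guarantee. This is precisely the argument of \cite{agrawal2013thompson}, Lemma 1, instantiated with the present choice of $v$.

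First I would record that $A_i(t)$, $b_i(t)$, $\hat{\mu}_i(t)=A_i(t)^{-1}b_i(t)$, and $s_i(t)=\sqrt{x_i(t)^{\top}A_i(t)^{-1}x_i(t)}$ are all $\mathcal{F}_{t-1}$-measurable, since each is built from the history $\mathcal{H}_{t-1}$ together with the current contexts $\{x_i(t)\}_{i\in[K]}$, which jointly generate $\mathcal{F}_{t-1}$. In MVTS-DN the mean sample is $\widetilde{\mu}_i(t)\sim\mathcal{N}(\hat{\mu}_i(t),v^2 A_i(t)^{-1})$, so $x_i(t)^{\top}\widetilde{\mu}_i(t)$, being an affine image of a Gaussian vector, is itself Gaussian given $\mathcal{F}_{t-1}$, with mean $x_i(t)^{\top}\hat{\mu}_i(t)$ and variance $v^2\,x_i(t)^{\top}A_i(t)^{-1}x_i(t)=v^2 s_i(t)^2$. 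Consequently $Z_i:=\big(x_i(t)^{\top}\widetilde{\mu}_i(t)-x_i(t)^{\top}\hat{\mu}_i(t)\big)/\big(v\,s_i(t)\big)$ is a standard normal conditioned on $\mathcal{F}_{t-1}$ (no independence across arms is needed).

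Next I would apply the standard Gaussian tail bound $\mathbb{P}(|Z_i|>z)\le e^{-z^2/2}$ with the threshold $z=\sqrt{4d\ln T\sqrt{Kd}}$, giving a per-arm failure probability at most $e^{-2d\sqrt{Kd}\,\ln T}=T^{-2d\sqrt{Kd}}$, and then union-bound over $i\in[K]$ to obtain $\mathbb{P}(\neg E^{\tilde{\mu}}(t)\mid\mathcal{F}_{t-1})\le K\,T^{-2d\sqrt{Kd}}$. The one place that requires care, and which I expect to be the main (though mild) obstacle, is verifying that this is at most $\frac{1}{T^2}$: equivalently that the $\sqrt{Kd}$ factor baked into the threshold is large enough to swallow both the union-bound factor $K$ and the target exponent $2$, i.e.\ that $2d\sqrt{Kd}\,\ln T\ge\ln K+2\ln T$. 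Since $d\sqrt{Kd}=d^{3/2}\sqrt{K}\ge\sqrt{K}$ for $d\ge1$, this reduces to $2(\sqrt{K}-1)\ln T\ge\ln K$, which holds for all $K\ge1$ once $T$ is not too small, with the $d\ge1$ dependence only adding slack. The remainder is a direct instantiation of the Agrawal–Goyal concentration step, with the constant $\sqrt{4d\ln T\sqrt{Kd}}$ chosen exactly so that the union bound closes at level $\frac{1}{T^2}$.
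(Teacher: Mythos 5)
Your proposal is correct and follows essentially the same route as the source: the paper does not reprove this lemma but imports it from \cite{agrawal2013thompson} (Lemma 1), whose proof is exactly your argument --- conditional on $\mathcal{F}_{t-1}$ the quantity $x_i(t)^{\top}\widetilde{\mu}_i(t)$ is Gaussian with mean $x_i(t)^{\top}\hat{\mu}_i(t)$ and standard deviation $v\,s_i(t)$, so a Gaussian tail bound plus a union bound over the $K$ arms closes at level $\frac{1}{T^2}$; this is also precisely the pattern the paper uses in its own proof of the analogous variance statement (Lemma~\ref{lem: 4}). Your closing arithmetic check (that the $\sqrt{Kd}$ factor absorbs the union bound for $T$ not too small) is a legitimate reading of the threshold and introduces no gap beyond the ``large $T$'' slack the paper itself invokes elsewhere.
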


\begin{lemma}\label{lem: 4}
Define $E^{\widetilde{\sigma}}(t)$ as the event that $\widetilde{\sigma}_i^2(t)$ is concentrated around $\hat{\sigma}_i^2(t)$ for any arm $i$, i.e.,
$$
E^{\widetilde{\sigma}}(t):=\{\mid \widetilde{\sigma}_i^2(t) - \hat{\sigma}_i^2(t) \mid \leq 2\sqrt{\ln T\sqrt{K}}u \frac{1}{\sqrt{\mid T_i(t) \mid}}, \forall i \in [K] \}.
$$
Then $\mathbb{P}(E^{\widetilde{\sigma}}(t)  \mid \mathcal{F}_{t-1}) \ge 1 - \frac{1}{T^2}$.
\end{lemma}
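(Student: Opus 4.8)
The plan is to exploit the fact that, unlike the Gamma posterior of the original algorithm, the variant MVTS-DN draws $\widetilde{\sigma}_i^2(t)$ from a Gaussian whose mean is \emph{exactly} the estimate $\hat{\sigma}_i^2(t)=D_i(t)/C_i(t)$. Thus the deviation $\widetilde{\sigma}_i^2(t)-\hat{\sigma}_i^2(t)$ is a centred Gaussian, and the claim reduces to a one-dimensional Gaussian tail bound for each arm followed by a union bound over the $K$ arms. This is precisely the Gaussian-sampling template already used for the regret mean in Lemma~\ref{lem: 3}, and the present lemma is its variance analogue.

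First I would note that, conditioned on $\mathcal{F}_{t-1}$, both $\hat{\sigma}_i^2(t)$ and $\mid T_i(t)\mid$ are deterministic: they are functions only of the contexts and rewards observed through round $t-1$, all of which are contained in $\mathcal{H}_{t-1}$, hence $\mathcal{F}_{t-1}$-measurable. Therefore the sole source of randomness in $\widetilde{\sigma}_i^2(t)\sim\mathcal{N}(\hat{\sigma}_i^2(t),\,u^2/\mid T_i(t)\mid)$ is the injected sampling noise, so that $Z_i:=\sqrt{\mid T_i(t)\mid}\,(\widetilde{\sigma}_i^2(t)-\hat{\sigma}_i^2(t))/u$ is a standard normal random variable given $\mathcal{F}_{t-1}$. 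Because each arm is pulled once during initialization, $\mid T_i(t)\mid\ge 1$ for all $t$, so no degeneracy in the variance $u^2/\mid T_i(t)\mid$ arises.

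Next I would rewrite the per-arm failure event $\{\mid\widetilde{\sigma}_i^2(t)-\hat{\sigma}_i^2(t)\mid>2\sqrt{\ln T\sqrt{K}}\,u/\sqrt{\mid T_i(t)\mid}\}$ as $\{\mid Z_i\mid>2\sqrt{\ln T\sqrt{K}}\}$ and apply the standard Gaussian tail inequality $\mathbb{P}(\mid Z_i\mid\ge z\mid\mathcal{F}_{t-1})\le e^{-z^2/2}$. Setting $z=2\sqrt{\ln T\sqrt{K}}$ gives $z^2/2=2\sqrt{K}\ln T$, so each arm fails with conditional probability at most $T^{-2\sqrt{K}}$. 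A union bound over the $K$ arms then yields $\mathbb{P}\big((E^{\widetilde{\sigma}}(t))^c\mid\mathcal{F}_{t-1}\big)\le K\,T^{-2\sqrt{K}}$.

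The only step requiring care is verifying that $K\,T^{-2\sqrt{K}}\le T^{-2}$, i.e. that the $\sqrt{K}$ placed inside the threshold leaves enough slack in the exponent to absorb the union-bound factor $K$. Equivalently one must check $\ln K\le 2(\sqrt{K}-1)\ln T$, which holds with equality at $K=1$ and comfortably for all larger $K$ once $T$ is not too small, a condition that is harmless in the bandit regime of interest where $T$ is large. This bookkeeping is essentially the only obstacle; the remainder is the same concentration argument as in Lemma~\ref{lem: 3}, and it is exactly the technical payoff of having replaced Gamma sampling by Gaussian sampling of the regret variance in MVTS-DN.
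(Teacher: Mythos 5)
Your proof takes essentially the same route as the paper's: conditioned on $\mathcal{F}_{t-1}$, the quantities $\hat{\sigma}_i^2(t)$ and $\mid T_i(t)\mid$ are measurable, so $\widetilde{\sigma}_i^2(t)-\hat{\sigma}_i^2(t)$ is a centred Gaussian with standard deviation $u/\sqrt{\mid T_i(t)\mid}$; a one-dimensional Gaussian tail bound per arm followed by a union bound over the $K$ arms gives the claim. The paper does exactly this, except that it invokes the sharper Abramowitz--Stegun-type tail estimate (Lemma 5 of Agrawal and Goyal, $\mathbb{P}(\mid Z\mid \ge z)\le \tfrac{1}{\sqrt{\pi}z}e^{-z^2/2}$) rather than your cruder $e^{-z^2/2}$.

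The one substantive discrepancy is your reading of the threshold. The paper's proof reveals that $\ln T\sqrt{K}$ is meant as $\ln\bigl(T\sqrt{K}\bigr)$: with $z=2\sqrt{\ln(T\sqrt{K})}$ one gets $e^{-z^2/2}=(T\sqrt{K})^{-2}=\tfrac{1}{KT^2}$ per arm, so the union bound over $K$ arms yields exactly $\tfrac{1}{T^2}$ with no residual condition relating $K$ and $T$. You instead read it as $(\ln T)\sqrt{K}$, which is why you end up needing the side condition $K\,T^{-2\sqrt{K}}\le T^{-2}$, i.e.\ $\ln K\le 2(\sqrt{K}-1)\ln T$. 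As you acknowledge, this only holds for $T$ not too small; with your tail bound it genuinely fails at, e.g., $T=K=2$ (where $2\cdot 2^{-2\sqrt{2}}\approx 0.28 > 0.25$), so your argument as written does not establish the lemma for all $T$, whereas the stated bound carries no such restriction. This is a minor, self-inflicted gap: under the intended parsing the ``bookkeeping obstacle'' you flag disappears entirely, and the rest of your argument matches the paper's.
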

\begin{proof}
A direct application of Lemma 5 in \cite{agrawal2013thompson} gives:
\begin{align*}
    & \mathbb{P}\Big(\mid \frac{\sqrt{\mid T_i(t) \mid}}{u} (\widetilde{\sigma}_i^2(t) - \hat{\sigma}_i^2(t))\mid \geq 2\sqrt{\ln T\sqrt{K}}\mid \mathcal{F}_{t-1} \Big)\\
    & \leq \frac{1}{\sqrt{\pi}}\cdot \frac{1}{2\sqrt{\ln T\sqrt{K}}} \exp(-2\ln T\sqrt{K})\\
    & \leq \frac{1}{KT^2}.
\end{align*}
Hence $E^{\widetilde{\sigma}}(t)$ holds with probability at least $1-K \cdot \frac{1}{KT^2} = 1 - \frac{1}{T^2}$.
\end{proof}

With Lemma \ref{lem: 1}-\ref{lem: 4}, we can derive the concentration bounds for $\tilde{\mu}_i$ and $\tilde{\sigma}^2_i$ around the true value $\mu_i$ and $\sigma_i^2$, which is useful to bound the regret in terms of $s_{a(t)}(t)$, $s_{a^*(t)}(t)$, $\frac{1}{\sqrt{\mid T_{a(t)}(t)} \mid}$ and $\frac{1}{\sqrt{\mid T_{a^*(t)}(t)} \mid}$. 

The next step is to bound the probability of pulling an arm in the set $S(t)$ as shown in Lemma \ref{lem: 7} by the probability of pulling an optimal arm. To prove lemma \ref{lem: 7}, we first present Lemma \ref{lem: 5} and Lemma \ref{lem: 6}, which lower bound the probability of sampling $\tilde{\mu}_i$ such that $ x_i(t)^\intercal \tilde{\mu}_i(t)$ exceeding $x_i^\intercal(t)\mu_i$ by $\ell(t)s_i(t)$ and sampling $\tilde{\sigma}_i^2$ less than $\sigma_i^2 - h(T)\frac{1}{\sqrt{\mid T_i(t) \mid}}$. 

\begin{lemma}\label{lem: 5}
Conditioned on $E^{\mu}(t)$, we have
\begin{align*}
    \mathbb{P}\Big(x_i(t)^{\top}\widetilde{\mu}_i(t) & \geq x_i(t)^{\top}\mu_i + \ell(T) s_i(t) \mid \mathcal{F}_{t-1} \Big) \\
    & \geq \frac{1}{2\sqrt{\pi \epsilon \ln T \cdot T^{\epsilon}}}.
\end{align*}
\end{lemma}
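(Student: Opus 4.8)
The plan is to reduce the statement to a standard Gaussian tail (anti-concentration) lower bound. First I would identify the conditional law of the sampled quantity. In MVTS-DN the regret mean is drawn as $\widetilde{\mu}_i(t) \sim \mathcal{N}(A_i(t)^{-1}b_i(t),\, v^2 A_i(t)^{-1})$, so conditioned on $\mathcal{F}_{t-1}$ the scalar $x_i(t)^{\top}\widetilde{\mu}_i(t)$ is Gaussian with mean $x_i(t)^{\top}\hat{\mu}_i(t)$ and variance $v^2 x_i(t)^{\top}A_i(t)^{-1}x_i(t) = v^2 s_i(t)^2$, recalling $\hat{\mu}_i(t)=A_i(t)^{-1}b_i(t)$ and $s_i(t)=\sqrt{x_i(t)^{\top}A_i(t)^{-1}x_i(t)}$. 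Writing $Z = (x_i(t)^{\top}\widetilde{\mu}_i(t) - x_i(t)^{\top}\hat{\mu}_i(t))/(v\,s_i(t))$, which is standard normal given $\mathcal{F}_{t-1}$, the event in the statement becomes $\{Z \ge z\}$ with threshold $z = (x_i(t)^{\top}\mu_i - x_i(t)^{\top}\hat{\mu}_i(t) + \ell(T)s_i(t))/(v\,s_i(t))$.

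Next I would use the conditioning on $E^{\mu}(t)$ to control $z$. By Lemma~\ref{lem: 1}, on $E^{\mu}(t)$ we have $x_i(t)^{\top}\mu_i - x_i(t)^{\top}\hat{\mu}_i(t) \le \ell(T)s_i(t)$, hence $z \le 2\ell(T)/v$. Since the Gaussian tail $\mathbb{P}(Z \ge \cdot \mid \mathcal{F}_{t-1})$ is decreasing, it suffices to lower bound $\mathbb{P}(Z \ge 2\ell(T)/v \mid \mathcal{F}_{t-1})$. Plugging in $\ell(T)=R\sqrt{d\ln T\ln\frac{4K}{\delta}}+1$ and $v=R\sqrt{\frac{4}{\epsilon}d\ln\frac{4K}{\delta}}$, the leading term of $2\ell(T)/v$ simplifies exactly to $\sqrt{\epsilon\ln T}$, with the $+1$ contributing only the additive remainder $\frac{\sqrt{\epsilon}}{R\sqrt{d\ln\frac{4K}{\delta}}}$. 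I would then apply the Gaussian anti-concentration inequality $\mathbb{P}(Z \ge z) \ge \frac{1}{\sqrt{2\pi}}\frac{z}{z^2+1}e^{-z^2/2}$: with $z\approx\sqrt{\epsilon\ln T}$ this gives $e^{-z^2/2}\approx T^{-\epsilon/2}$ and $\frac{z}{z^2+1}\approx\frac{1}{\sqrt{\epsilon\ln T}}$, which combine to the claimed lower bound $\frac{1}{2\sqrt{\pi\epsilon\ln T\cdot T^{\epsilon}}}$.

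The main obstacle is the bookkeeping of constants rather than any conceptual step. Care is needed to (i) absorb the additive remainder $\frac{\sqrt{\epsilon}}{R\sqrt{d\ln\frac{4K}{\delta}}}$ using $R\ge 1$, $d\ge 2$, and $\ln\frac{4K}{\delta}>1$ so that the extra factor it contributes to $e^{-z^2/2}$ stays bounded, and (ii) match the precise constant $\frac{1}{2\sqrt{\pi}}$. The slack comes from keeping $\frac{z}{z^2+1}$ exact (rather than the cruder $\frac{1}{2z}$) so that the leading coefficient is $\frac{1}{\sqrt{2}}>\frac{1}{2}$, leaving room to absorb the lower-order corrections while landing on the stated coefficient. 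Monotonicity of the Gaussian tail makes the reduction from the exact threshold to $2\ell(T)/v$ legitimate even though $\frac{z}{z^2+1}$ is itself non-monotone, which is the subtle point I would be careful to state explicitly.
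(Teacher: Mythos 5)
Your reduction is exactly the paper's: condition on $E^{\mu}(t)$ so that the standardized threshold $z=\bigl(x_i(t)^{\top}\mu_i-x_i(t)^{\top}\hat{\mu}_i(t)+\ell(T)s_i(t)\bigr)/(v\,s_i(t))$ is at most $2\ell(T)/v$, use monotonicity of the Gaussian tail to move to that larger threshold, apply a Gaussian anti-concentration bound there, and simplify assuming $\epsilon\ln T\geq 1$. The paper does precisely this, citing the anti-concentration inequality of Lemma 5 in \cite{agrawal2013thompson}; the only difference is that the paper sidesteps your remainder bookkeeping by simply asserting $\sqrt{\epsilon\ln T}=2\ell(T)/v$.

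However, your step (i) — absorbing the additive remainder — does not go through. Write $2\ell(T)/v=\sqrt{\epsilon\ln T}+r$ with $r=\frac{\sqrt{\epsilon}}{R\sqrt{d\ln\frac{4K}{\delta}}}$, a positive constant independent of $T$. Then
\begin{align*}
    \exp\Bigl(-\tfrac{1}{2}\bigl(\sqrt{\epsilon\ln T}+r\bigr)^{2}\Bigr)
    = T^{-\epsilon/2}\, e^{-r\sqrt{\epsilon\ln T}}\, e^{-r^{2}/2},
\end{align*}
and the cross-term factor $e^{-r\sqrt{\epsilon\ln T}}$ is \emph{not} bounded below by a positive constant: it tends to $0$ as $T\to\infty$. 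The constant slack you reserve — the ratio $\sqrt{2}$ between the exact coefficient $\frac{1}{\sqrt{2\pi}}\frac{z}{z^{2}+1}$ and the target $\frac{1}{2\sqrt{\pi}z}$ — can only absorb this factor while $r\sqrt{\epsilon\ln T}\leq\frac{1}{2}\ln 2$, i.e.\ for $T$ below a threshold determined by $R,d,K,\delta,\epsilon$; for larger $T$ the inequality you need fails, no matter how the constants $R\geq 1$, $d\geq 2$, $\ln\frac{4K}{\delta}>1$ are used. To be clear, you have actually uncovered a real imprecision in the paper: its proof is only "valid" because it treats $2\ell(T)/v$ as exactly $\sqrt{\epsilon\ln T}$, ignoring the $+1$ in $\ell(T)$ (which cannot be dropped, since it comes from the regularization term in Lemma~\ref{lem: 1}). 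A clean repair is to enlarge the sampling scale to $v=R\sqrt{\frac{4}{\epsilon}d\ln\frac{4K}{\delta}}+\frac{2}{\sqrt{\epsilon\ln T}}$ (equivalently, define $v=2\ell(T)/\sqrt{\epsilon\ln T}$), which makes $2\ell(T)/v\leq\sqrt{\epsilon\ln T}$ hold exactly and leaves the rest of your argument intact; alternatively one accepts the extra sub-polynomial factor $e^{-r\sqrt{\epsilon\ln T}}$ in the lemma and propagates it through Lemma~\ref{lem: 7} and Theorem~\ref{thm: regret}, which degrades the final regret bound only by a $T^{o(1)}$ factor.
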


\begin{proof}
Given the event $E^{\mu}(t)$, we have
$$
\mid x_i(t)^{\top}\hat{\mu}_i(t) - x_i(t)^{\top} \mu_i(t) \mid \leq \ell(T) s_i(t).
$$
Since $x_i(t)^{\top}\widetilde{\mu}_i(t)$ is a Gaussian random variable that has mean $x_i(t)^{\top}\hat{\mu}_i(t)$ and standard deviation $v s_i(t)$. Using the anti-concentration inequality in Lemma 5 in \cite{agrawal2013thompson}, we have
\begin{align*}
    & \mathbb{P}\Big(x_i(t)^{\top}\widetilde{\mu}_i(t) \geq x_i(t)^{\top}\mu_i + \ell(T) s_i(t) \mid \mathcal{F}_{t-1} \Big) \nonumber\\
    & = \mathbb{P}\Big(\frac{x_i(t)^{\top}\widetilde{\mu}_i(t) - x_i(t)^{\top}\hat{\mu}_i(t)}{v s_i(t)} \geq \nonumber\\
    & ~~~~~~~~\frac{x_i(t)^{\top}\mu_i - x_i(t)^{\top}\hat{\mu}_i(t) + \ell(T)s_i(t)}{v s_i(t)} \mid \mathcal{F}_{t-1} \Big) \nonumber\\
    & \geq \mathbb{P}\Big(\frac{x_i(t)^{\top}\widetilde{\mu}_i(t) - x_i(t)^{\top}\hat{\mu}_i(t)}{v s_i(t)} \geq Z_t \mid \mathcal{F}_{t-1} \Big)\nonumber\\
    & \geq \frac{1}{\sqrt{\pi}} \frac{1}{Z_t + 1 / Z_t} \exp(-\frac{Z_t^2}{2}).
\end{align*}
where
\begin{align*}
    \mid Z_t \mid & = \sqrt{\epsilon \ln T} \\
    & = \frac{2 \ell(T)}{v} \\
    & \geq \mid \frac{x_i(t)^{\top}\mu_i - x_i(t)^{\top}\hat{\mu}_i(t) + \ell(T)s_i(t)}{v s_i(t)} \mid.
\end{align*}
Therefore, we have
\begin{align*}
    & \mathbb{P}\Big(x_i(t)^{\top}\widetilde{\mu}_i(t) \geq x_i(t)^{\top}\mu_i + \ell(T) s_i(t) \mid \mathcal{F}_{t-1} \Big) \nonumber\\
    & \geq \frac{1}{\sqrt{\pi}} \frac{1}{\sqrt{\epsilon \ln T} + 1 / \sqrt{\epsilon \ln T}} \exp(-\frac{\epsilon \ln T}{2}).
\end{align*}
Without loss of generality, we can set $\epsilon \ln T \geq 1$ for large $T$. Thus
\begin{align*}
    & \mathbb{P}\Big(x_i(t)^{\top}\widetilde{\mu}_i(t) \geq x_i(t)^{\top}\mu_i + \ell(T) s_i(t) \mid \mathcal{F}_{t-1} \Big) \nonumber\\
    & \geq \frac{1}{2\sqrt{\pi \epsilon \ln T \cdot T^{\epsilon}}}.
\end{align*}
\end{proof}

\begin{lemma}\label{lem: 6}
Conditioned on $E^{\sigma}(t)$, we have
\begin{align*}
    \mathbb{P}\Big(\widetilde{\sigma}_i^2(t) & \leq \sigma_i^2 - h(T) \frac{1}{\sqrt{\mid T_i(t) \mid}} \mid \mathcal{F}_{t-1} \Big) \\
    & \geq \frac{1}{2\sqrt{\pi \epsilon \ln T \cdot T^{\epsilon}}}.
\end{align*}
\end{lemma}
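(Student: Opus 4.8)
The plan is to mirror the proof of Lemma~\ref{lem: 5} almost verbatim, replacing the Gaussian sample of the regret mean by the Gaussian sample of the regret variance and exploiting the symmetry of the normal distribution to convert a lower tail into an upper tail. Recall that in MVTS-DN the sample $\widetilde{\sigma}_i^2(t)$ is drawn from $\mathcal{N}\big(\hat{\sigma}_i^2(t),\, u^2/\mid T_i(t)\mid\big)$, so the standardized quantity $Z:=\frac{\sqrt{\mid T_i(t)\mid}}{u}\big(\widetilde{\sigma}_i^2(t)-\hat{\sigma}_i^2(t)\big)$ is a standard normal random variable conditioned on $\mathcal{F}_{t-1}$. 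First I would rewrite the target event $\{\widetilde{\sigma}_i^2(t)\le\sigma_i^2-h(T)/\sqrt{\mid T_i(t)\mid}\}$ in standardized form as a lower-tail event $\{Z\le z_t\}$ with threshold $z_t=\frac{\sqrt{\mid T_i(t)\mid}}{u}\big(\sigma_i^2-\hat{\sigma}_i^2(t)-h(T)/\sqrt{\mid T_i(t)\mid}\big)$.

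Next, conditioning on $E^{\sigma}(t)$ yields $\sigma_i^2-\hat{\sigma}_i^2(t)\ge -h(T)/\sqrt{\mid T_i(t)\mid}$, so that $z_t\ge -2h(T)/u$. Since the standard normal CDF is monotone, the target probability is bounded below by $\mathbb{P}(Z\le -2h(T)/u)$, which by symmetry equals $\mathbb{P}(Z\ge 2h(T)/u)$. The key algebraic point, and the reason the constants were chosen as $u=8R^2 d\ln\frac{4K}{\delta}\sqrt{1/\epsilon}$ and $h(T)=4R^2 d\ln\frac{4K}{\delta}\sqrt{\ln T}$, is the cancellation $\frac{2h(T)}{u}=\sqrt{\epsilon\ln T}$, which is precisely the value $\mid Z_t\mid$ appearing in Lemma~\ref{lem: 5}. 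I would then invoke the Gaussian anti-concentration inequality (Lemma~5 of \cite{agrawal2013thompson}) already used in Lemma~\ref{lem: 5}, namely $\mathbb{P}(Z\ge z)\ge\frac{1}{\sqrt{\pi}}\frac{1}{z+1/z}\exp(-z^2/2)$, with $z=\sqrt{\epsilon\ln T}$, and simplify under the standing assumption $\epsilon\ln T\ge 1$ (so that $z+1/z\le 2z$) to obtain exactly $\frac{1}{2\sqrt{\pi\,\epsilon\ln T\cdot T^{\epsilon}}}$.

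There is essentially no hard step here once the anti-concentration tool is in hand; the substantive estimation work has already been carried out in Lemma~\ref{lem: 2}, which both controls $\hat{\sigma}_i^2(t)$ and pins down $h(T)$. The only points requiring care, though neither is difficult, are matching the direction of the tail (we want the \emph{sampled} variance to be small, hence a lower tail, which symmetry turns into the upper-tail anti-concentration bound) and verifying the cancellation $2h(T)/u=\sqrt{\epsilon\ln T}$, which is exactly what makes the decaying sampling variance $u^2/\mid T_i(t)\mid$ compatible with the estimation band $h(T)/\sqrt{\mid T_i(t)\mid}$. This lemma simply harvests the anti-concentration probability that will later be used to lower-bound the chance of sampling an optimistic regret variance for the optimal arm.
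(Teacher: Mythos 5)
Your proposal is correct and follows exactly the route the paper intends: the paper's own proof of this lemma is just the one-line remark that it is ``similar to Lemma~\ref{lem: 5}'', and your argument is the faithful fleshing-out of that analogy (standardize the Gaussian sample, use $E^{\sigma}(t)$ to bound the threshold by $-2h(T)/u$, flip the tail by symmetry, and apply the anti-concentration bound with $\epsilon\ln T\ge 1$). Your explicit verification of the cancellation $2h(T)/u=\sqrt{\epsilon\ln T}$, which is what the definitions of $u$ and $h(T)$ are engineered for, is a detail the paper leaves implicit, and it checks out.
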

\begin{proof}
The proof is similar to Lemma~\ref{lem: 5}.
\end{proof}

Define shorthand notations
$$
\omega(t)=x_{a^{*}(t)}(t)^{\top}\mu_{a^{*}(t)} - x_i(t)^{\top}\mu_i,
$$
$$
\Gamma_i(t)=\sigma_i^2 - \sigma_{a^{*}(t)}^2,
$$
$$
\Lambda_i(t)=x_i(t)^{\top}\mu_i - \rho \sigma_i^2.
$$
Replacing $\mu_i$, $\sigma_i$ by their estimates $\hat{\mu}_i(t)$, $\hat{\sigma}_i(t)$, we have corresponding shorthand notations for $\hat{\omega}(t)$, $\hat{\Gamma}_i(t)$ and $\hat{\Lambda}_i(t)$. Replacing $\hat{\mu}_i(t)$, $\hat{\sigma}_i(t)$ by their samples $\widetilde{\mu}_i(t)$, $\widetilde{\sigma}_i(t)$, we have corresponding shorthand notations for $\widetilde{\omega}(t)$, $\widetilde{\Gamma}_i(t)$ and $\widetilde{\Lambda}_i(t)$.

\begin{lemma}\label{lem: 7}
Given any filtration $\mathcal{F}_{t-1}$ such that event $E^{\mu}(t)$ and $E^{\sigma}(t)$ hold, we have
\begin{align*}
    \mathbb{P}\big(a(t) \in S(t) \mid \mathcal{F}_{t-1}\big) \leq \frac{1}{p}\mathbb{P}\big(a(t)=a^{*}(t)\mid\mathcal{F}_{t-1}\big) + \frac{2}{pT^2},
\end{align*}
where $p=\frac{1}{4\pi\epsilon\ln T \cdot T^{\epsilon}}$.
\end{lemma}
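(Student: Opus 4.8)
The plan is to follow the blueprint of Lemma~4 in \cite{agrawal2013thompson}, with the new twist that the mean-variance objective requires the optimal arm to be sampled optimistically in \emph{both} coordinates simultaneously: its sampled regret mean must deviate upward while its sampled regret variance deviates downward. The strategy has three stages: (i) isolate an ``optimistic'' event $\Theta(t)$ for $a^*(t)$, depending only on the sample of $a^*(t)$, on which $\widetilde{\operatorname{MV}}_{a^*(t)}(t)$ exceeds an $\mathcal{F}_{t-1}$-measurable threshold $L(t)$; (ii) show via the concentration lemmas that every saturated arm stays below $L(t)$ with high probability; (iii) exploit the conditional independence of the per-arm samples given $\mathcal{F}_{t-1}$ to convert the event $\{a(t)\in S(t)\}$ into $\{a(t)=a^*(t)\}$, paying the factor $1/p$.

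First I would set $\Theta(t):=\{x_{a^*(t)}(t)^{\top}\widetilde{\mu}_{a^*(t)}(t)\ge x_{a^*(t)}(t)^{\top}\mu_{a^*(t)}+\ell(T)s_{a^*(t)}(t)\}\cap\{\widetilde{\sigma}^2_{a^*(t)}(t)\le \sigma^2_{a^*(t)}-h(T)\tfrac{1}{\sqrt{\mid T_{a^*(t)}(t)\mid}}\}$. Since the regret mean and regret variance are sampled independently in Algorithm~\ref{alg: MVTS_DN}, Lemma~\ref{lem: 5} and Lemma~\ref{lem: 6} multiply to give $\mathbb{P}(\Theta(t)\mid\mathcal{F}_{t-1})\ge \left(\tfrac{1}{2\sqrt{\pi\epsilon\ln T\cdot T^{\epsilon}}}\right)^2=p$. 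Forming $\widetilde{\operatorname{MV}}_{a^*(t)}(t)=x_{a^*(t)}(t)^{\top}\widetilde{\mu}_{a^*(t)}(t)-\rho\widetilde{\sigma}^2_{a^*(t)}(t)$ on $\Theta(t)$ then yields $\widetilde{\operatorname{MV}}_{a^*(t)}(t)\ge L(t)$, where $L(t):=\operatorname{MV}_{a^*(t)}(t)+\ell(T)s_{a^*(t)}(t)+\rho h(T)\tfrac{1}{\sqrt{\mid T_{a^*(t)}(t)\mid}}$ is $\mathcal{F}_{t-1}$-measurable.

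Next, using Lemmas~\ref{lem: 1}--\ref{lem: 4}, on $E^{\mu}(t)\cap E^{\sigma}(t)\cap E^{\tilde\mu}(t)\cap E^{\tilde\sigma}(t)$ the triangle inequality gives, for every arm $j$, $x_j(t)^{\top}\widetilde{\mu}_j(t)\le x_j(t)^{\top}\mu_j+g(T)s_j(t)$ and $\widetilde{\sigma}^2_j(t)\ge \sigma_j^2-q(T)\tfrac{1}{\sqrt{\mid T_j(t)\mid}}$, so that $\widetilde{\operatorname{MV}}_j(t)\le \operatorname{MV}_j(t)+g(T)s_j(t)+\rho q(T)\tfrac{1}{\sqrt{\mid T_j(t)\mid}}$. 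For $j\in S(t)$ the defining inequality of the saturated set together with $\operatorname{MV}_j(t)\le \operatorname{MV}_{a^*(t)}(t)$ then forces $\widetilde{\operatorname{MV}}_j(t)\le L(t)$. Writing $F(t):=E^{\tilde\mu}(t)\cap E^{\tilde\sigma}(t)$, Lemmas~\ref{lem: 3}--\ref{lem: 4} give $\mathbb{P}(F(t)^c\mid\mathcal{F}_{t-1})\le \tfrac{2}{T^2}$; and since $g(T)>\ell(T)$ and $q(T)>h(T)$, the optimal arm itself is never saturated, i.e.\ $a^*(t)\notin S(t)$.

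The final and most delicate step is the probabilistic transfer. Let $B(t):=\max_{j\ne a^*(t)}\widetilde{\operatorname{MV}}_j(t)$; it is a function of the samples of the non-optimal arms only, hence conditionally independent of $\Theta(t)$ (a function of the sample of $a^*(t)$ alone) given $\mathcal{F}_{t-1}$. On $F(t)\cap\{a(t)\in S(t)\}$ the winning arm is saturated and, because $a^*(t)\notin S(t)$, distinct from $a^*(t)$, so $B(t)\le L(t)$; thus $\mathbb{P}(B(t)\le L(t)\mid\mathcal{F}_{t-1})\ge \mathbb{P}(a(t)\in S(t)\mid\mathcal{F}_{t-1})-\tfrac{2}{T^2}$. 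Conversely, on $\{B(t)\le L(t)\}\cap\Theta(t)$ we have $\widetilde{\operatorname{MV}}_{a^*(t)}(t)\ge L(t)\ge B(t)$, so $a^*(t)$ attains the maximum and $a(t)=a^*(t)$. Conditional independence then gives
\begin{align*}
\mathbb{P}(a(t)=a^*(t)\mid\mathcal{F}_{t-1}) & \ge \mathbb{P}\big(\{B(t)\le L(t)\}\cap\Theta(t)\mid\mathcal{F}_{t-1}\big)\\
& = \mathbb{P}(B(t)\le L(t)\mid\mathcal{F}_{t-1})\,\mathbb{P}(\Theta(t)\mid\mathcal{F}_{t-1})\\
& \ge p\Big(\mathbb{P}(a(t)\in S(t)\mid\mathcal{F}_{t-1})-\tfrac{2}{T^2}\Big),
\end{align*}
and rearranging (using $p\le 1$ to replace $\tfrac{2}{T^2}$ by $\tfrac{2}{pT^2}$) yields the claim. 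I expect this factorization to be the main obstacle: one must justify that $\Theta(t)$ and the threshold event $\{B(t)\le L(t)\}$ genuinely depend on disjoint, independent blocks of samples given $\mathcal{F}_{t-1}$, and track the two sources of slack, namely the concentration failure $\tfrac{2}{T^2}$ and the anti-concentration probability $p$ arising from \emph{both} the mean and the variance samples.
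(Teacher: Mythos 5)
Your proposal is correct and follows essentially the same route as the paper's proof: the same anti-concentration event for $a^{*}(t)$ built from Lemmas~\ref{lem: 5} and~\ref{lem: 6} (giving the factor $p$ via independence of the mean and variance samples), the same $\tfrac{2}{T^2}$ slack from Lemmas~\ref{lem: 3} and~\ref{lem: 4} controlling the saturated arms, and the same per-arm conditional-independence factorization given $\mathcal{F}_{t-1}$. The only differences are organizational -- you phrase the transfer through $B(t)=\max_{j\neq a^{*}(t)}\widetilde{\operatorname{MV}}_j(t)$ and note explicitly that $a^{*}(t)\notin S(t)$ (which the paper needs for its factorization but leaves implicit), and your rearrangement in fact yields the marginally tighter slack $\tfrac{2}{T^2}$ before you relax it to $\tfrac{2}{pT^2}$.
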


\begin{proof}
Note that the algorithm chooses arm $a^{*}(t)$ to pull at round $t$ if the following event happens:
$$
\widetilde{\omega}_j(t) + \rho \widetilde{\Gamma}_j(t) \geq 0, \forall j \neq a^*(t).
$$
Therefore, we have
\begin{align*}
    & \mathbb{P}\big(a(t)=a^{*}(t)\mid\mathcal{F}_{t-1}\big) \\
    & \geq \mathbb{P}\big(\widetilde{\omega}_j(t) + \rho \widetilde{\Gamma}_j(t) \geq 0, \forall j \neq a^*(t) \mid \mathcal{F}_{t-1} \big)\\
    & \geq \mathbb{P}\big(\exists i \in S(t): \widetilde{\Lambda}_{a^{*}(t)}(t) \geq \widetilde{\Lambda}_i(t) \\
    & ~~~~~~~~~~~~~ \widetilde{\Lambda}_i(t) \geq \widetilde{\Lambda}_j(t), \forall j \neq a^{*}(t) \mid \mathcal{F}_{t-1} \big)\\
    & \geq \mathbb{P}\big(\forall i \in S(t), \widetilde{\Lambda}_{a^{*}(t)}(t) \geq \Lambda_{a^{*}(t)}(t) + \\
    &~~~~~~~~~~~~~\ell(T)s_{a^{*}(t)}(t)+\rho \frac{h(T)}{\sqrt{\mid T_i(t)\mid}} \geq \widetilde{\Lambda}_i(t), \\
    & ~~~~~~~\exists i \in S(t), \widetilde{\Lambda}_i(t) \geq \widetilde{\Lambda}_j(t), \forall j \neq a^{*}(t) \mid \mathcal{F}_{t-1}\big)\\
    & \geq \mathbb{P} \big ( \widetilde{\Lambda}_{a^{*}(t)}(t) \geq \Lambda_{a^{*}(t)}(t) + \ell(T)s_{a^{*}(t)}(t) + \frac{\rho h(T)}{\sqrt{\mid T_{a^{*}(t)}(t) \mid}},\\
    & ~~~~~~~\exists i \in S(t), \widetilde{\Lambda}_i(t) \geq \widetilde{\Lambda}_j(t), \forall j \neq a^{*}(t) \mid \mathcal{F}_{t-1} \big) \\
    & ~~- \mathbb{P}\big(\{\forall i \in S(t), \Lambda_{a^{*}(t)}(t) + \ell(T) s_{a^{*}(t)}(t) \\
    & ~~~~~~~~~~~~~~~~~~~~~+ \frac{\rho h(T)}{\sqrt{\mid T_{a^{*}(t)}(t) \mid}} \geq \widetilde{\Lambda}_i(t) \}^{c}\big)\\
    & = \mathbb{P}\big(\widetilde{\Lambda}_{a^{*}(t)}(t) \geq \Lambda_{a^{*}(t)}(t) + \ell(T)s_{a^{*}(t)}(t) \\
    & ~~~~~~~~~~~~~~~~~~~~~+ \frac{\rho h(T)}{\sqrt{\mid T_{a^{*}(t)}(t) \mid}} \mid \mathcal{F}_{t-1} \big)\\
    & ~~\cdot \mathbb{P}\big(\exists i \in S(t), \widetilde{\Lambda}_i(t) \geq \widetilde{\Lambda}_j(t), \forall j \neq a^{*}(t) \mid \mathcal{F}_{t-1} \big)\\
    & ~~- \mathbb{P}\big(\exists i \in S(t), \widetilde{\Lambda}_i(t) \geq \Lambda_{a^{*}(t)}(t) + \ell(T)s_{a^{*}(t)}(t) \\
    & ~~~~~~~~~~~~~~~~~~~~~+ \frac{\rho h(T)}{\sqrt{\mid T_{a^{*}(t)}(t) \mid}} \mid \mathcal{F}_{t-1} \big).
\end{align*}
Since 
\begin{align*}
    & \mathbb{P}\big( \widetilde{\Lambda}_{a^{*}(t)}(t) \geq \Lambda_{a^{*}(t)}(t) + \ell(T) s_{a^{*}(t)}(t) \\
    & ~~~~~~~~~~~~~~~ + \frac{\rho h(T)}{\sqrt{\mid T_{a^{*}(t)}(t) \mid}} \mid \mathcal{F}_{t-1}\big)\\
    & \geq \mathbb{P} \big(x_{a^{*}(t)}(t)^{\top} \widetilde{\mu}_{a^{*}(t)} \geq x_{a^{*}(t)}(t)^{\top} \mu_{a^{*}(t)} \\
    & ~~~~~~~~~~~~~~~+ \ell(T)s_{a^{*}(t)}(t) \mid \mathcal{F}_{t-1}\big)\\
    & ~~\cdot \mathbb{P}\big(\widetilde{\sigma}^2_{a^{*}(t)}(t) \geq \sigma^2_{a^{*}(t)} + \frac{h(T)}{\sqrt{\mid T_{a^{*}(t)}(t) \mid}} \mid \mathcal{F}_{t-1}\big)\\
    & \geq \frac{1}{4\pi \epsilon \ln T \cdot T^{\epsilon}} := p.
\end{align*}
Also note that when $E^{\widetilde{\mu}}(t) \bigcap E^{\widetilde{\sigma}}(t)$ holds true, we have that $\forall i \in S(t)$,
\begin{align*}
    \widetilde{\Lambda}_i(t) & \leq \Lambda_i(t) + g(T) s_{i}(t) + \rho h(T) \frac{1}{\sqrt{\mid T_i(t) \mid}} \\
    & \leq \Lambda_{a^{*}(t)}(t) + \ell(T) s_{a^{*}(t)}(t) + \rho q(T) \frac{1}{\sqrt{\mid T_i(t) \mid}}.
\end{align*}
Hence, we have
\begin{align*}
    & \mathbb{P}\big(a(t) = a^{*}(t) \mid \mathcal{F}_{t-1}\big) \\
    & \geq p \cdot \mathbb{P}\big(\exists i \in S(t), \widetilde{\Lambda}_i(t) \geq \widetilde{\Lambda}_j(t), \\
    &~~~~~~~~~~~\forall j \neq a^{*}(t) \mid \mathcal{F}_{t-1} \big) -\frac{2}{T^2} \\
    & \geq p \cdot \mathbb{P}\big(a(t) \in S(t) \mid \mathcal{F}_{t-1} \big) - \frac{2}{T^2}.
\end{align*}
Finally, we have
\begin{align*}
    \mathbb{P}\big(a(t) \in S(t) \mid \mathcal{F}_{t-1}\big) \leq \frac{1}{p}\mathbb{P}\big(a(t)=a^{*}(t)\mid\mathcal{F}_{t-1}\big) + \frac{2}{pT^2}.
\end{align*}
\end{proof}

We construct a super-martingale with respect to the regret in Lemma \ref{lem: 8}, which is used to bound the total regret later using Azuma-Hoeffding inequality. 

\begin{lemma}\label{lem: 8}
Recall that the regret at round $t$ is $\Delta_{a(t)}(t)$. Denote by $\mathbf{1}\{\cdot\}$ the indicator function. Let
$\Delta_{a(t)}^{\prime}(t) = \Delta_{a(t)}(t) \cdot \mathbf{1}\{E^{\mu}(t)\} \cdot \mathbf{1}\{E^{\sigma}(t)\}$. Let 
\begin{align*}
    Y_t  =& \Delta_{a(t)}^{\prime}(t)\\
    & - s_{a(t)}(t)\big(g(T) + \frac{g(T)^2}{\ell(T)} + \frac{g(T) q(T)}{\rho h(T)}\big) \\
    & - \frac{1}{\sqrt{\mid T_{a(t)}(t) \mid}} \big(\rho q(T) + \rho \frac{g(T) q(T)}{\ell(T)} + \rho \frac{q(T)^2}{h(T)} \big) \\
    & - s_{a^{*}}(t) \frac{g(T)}{p} \mathbf{1}\{a(t)=a^{*}(t)\} \\
    & - \frac{1}{\sqrt{\mid T_{a^{*}(t)}(t) \mid}} \rho\frac{q(T)}{p}\mathbf{1}\{a(t)=a^{*}(t)\}\\
    & - \frac{g(T)+\rho q(T)}{p T^2} - (1+\rho)\frac{2}{T^2}.
\end{align*}
Then $\sum_{s=1}^{t} Y_s$ is a super-martingale process with respect to the filtration $\mathcal{F}_{t}$. 
\end{lemma}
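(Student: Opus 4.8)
The plan is to establish the three defining properties of a super-martingale for $M_t:=\sum_{s=1}^{t}Y_s$ with respect to $\{\mathcal{F}_t\}$: adaptedness, integrability, and the one-step drift inequality $\mathbb{E}[Y_t\mid\mathcal{F}_{t-1}]\le 0$. Adaptedness and integrability are immediate and I would dispose of them first: every quantity feeding $Y_t$ is a function of the played arm $a(t)$, the estimates, and the contexts through round $t$, hence $\mathcal{F}_t$-measurable, while $a^{*}(t)$, $s_{a^{*}(t)}(t)$ and $|T_{a^{*}(t)}(t)|$ are fixed by the contexts and history in $\mathcal{F}_{t-1}$; moreover all terms are bounded, since Assumption~\ref{ass: regret}(iii) forces $\Delta_{a(t)}(t)\le 1+\rho$, the relation $A_i(t)\succeq\mathbf{I}_d$ gives $s_i(t)\le\|x_i(t)\|\le 1$, and $|T_i(t)|\ge 1$ gives $1/\sqrt{|T_i(t)|}\le 1$. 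Thus the entire content of the lemma is the drift bound, equivalently an upper bound on $\mathbb{E}[\Delta_{a(t)}'(t)\mid\mathcal{F}_{t-1}]$ by the conditional expectation of the subtracted terms.

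I would start from the pointwise regret bound already assembled in the proof outline: on $E^{\mu}(t)\cap E^{\sigma}(t)\cap E^{\widetilde{\mu}}(t)\cap E^{\widetilde{\sigma}}(t)$, since $a(t)$ maximizes the sampled mean-variance,
\[
\Delta_{a(t)}(t)\le g(T)\big(s_{a^{*}(t)}(t)+s_{a(t)}(t)\big)+\rho q(T)\Big(\tfrac{1}{\sqrt{|T_{a^{*}(t)}(t)|}}+\tfrac{1}{\sqrt{|T_{a(t)}(t)|}}\Big).
\]
Because $\Delta_{a(t)}'(t)$ already carries the indicators of $E^{\mu}(t)$ and $E^{\sigma}(t)$, I split off the sampling events: on $(E^{\widetilde{\mu}}(t)\cap E^{\widetilde{\sigma}}(t))^{c}$ I only use $\Delta_{a(t)}'(t)\le 1+\rho$, and Lemmas~\ref{lem: 3} and~\ref{lem: 4} with a union bound give $\mathbb{P}((E^{\widetilde{\mu}}(t)\cap E^{\widetilde{\sigma}}(t))^{c}\mid\mathcal{F}_{t-1})\le 2/T^{2}$, accounting for the final $(1+\rho)\tfrac{2}{T^{2}}$ term. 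The $s_{a(t)}(t)$ and $1/\sqrt{|T_{a(t)}(t)|}$ contributions are already indexed by the played arm and supply the leading coefficients $g(T)$ and $\rho q(T)$ of the subtracted terms; the real work is converting the optimal-arm quantities $g(T)s_{a^{*}(t)}(t)$ and $\rho q(T)/\sqrt{|T_{a^{*}(t)}(t)|}$ into played-arm quantities.

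For this conversion I would split on whether the played arm is unsaturated or saturated. If $a(t)\notin S(t)$, Definition~\ref{def: saturated} bounds both $s_{a^{*}(t)}(t)$ and $1/\sqrt{|T_{a^{*}(t)}(t)|}$ by linear combinations of $s_{a(t)}(t)$ and $1/\sqrt{|T_{a(t)}(t)|}$; multiplying by $g(T)$ and $\rho q(T)$ respectively and dropping the indicator $\mathbf{1}\{a(t)\notin S(t)\}$ against the nonnegative played-arm terms yields exactly the coefficients $\tfrac{g(T)^{2}}{\ell(T)},\tfrac{g(T)q(T)}{\rho h(T)}$ on $s_{a(t)}(t)$ and $\rho\tfrac{g(T)q(T)}{\ell(T)},\rho\tfrac{q(T)^{2}}{h(T)}$ on $1/\sqrt{|T_{a(t)}(t)|}$. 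If instead $a(t)\in S(t)$, the optimal-arm quantities cannot be controlled by the played arm, so I fall back on Lemma~\ref{lem: 7}: writing $\mathbb{E}[s_{a^{*}(t)}(t)\mathbf{1}\{a(t)\in S(t)\}\mid\mathcal{F}_{t-1}]=s_{a^{*}(t)}(t)\,\mathbb{P}(a(t)\in S(t)\mid\mathcal{F}_{t-1})$, invoking $\mathbb{P}(a(t)\in S(t)\mid\mathcal{F}_{t-1})\le\tfrac{1}{p}\mathbb{P}(a(t)=a^{*}(t)\mid\mathcal{F}_{t-1})+\tfrac{2}{pT^{2}}$, and using $s_{a^{*}(t)}(t)\le 1$, $1/\sqrt{|T_{a^{*}(t)}(t)|}\le 1$ produces the $\tfrac{g(T)}{p}s_{a^{*}}(t)\mathbf{1}\{a(t)=a^{*}(t)\}$ and $\rho\tfrac{q(T)}{p}\tfrac{1}{\sqrt{|T_{a^{*}(t)}(t)|}}\mathbf{1}\{a(t)=a^{*}(t)\}$ terms plus the $\tfrac{g(T)+\rho q(T)}{pT^{2}}$ constant. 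Taking conditional expectations and collecting coefficients then gives $\mathbb{E}[\Delta_{a(t)}'(t)\mid\mathcal{F}_{t-1}]\le\mathbb{E}[\text{subtracted terms}\mid\mathcal{F}_{t-1}]$, i.e. $\mathbb{E}[Y_t\mid\mathcal{F}_{t-1}]\le 0$.

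The main obstacle is precisely this optimal-arm bookkeeping: $s_{a^{*}(t)}(t)$ and $1/\sqrt{|T_{a^{*}(t)}(t)|}$ are neither observable nor controllable by the algorithm, and the only handle on them is the saturated/unsaturated dichotomy of Definition~\ref{def: saturated} together with the probability transfer of Lemma~\ref{lem: 7}. Getting the constants to line up so that each converted term is dominated by its prescribed counterpart in $Y_t$ — in particular keeping the $\rho$-dependence consistent between the mean part and the variance part, and ensuring the $1/T^{2}$ residuals from Lemma~\ref{lem: 7} and from the failure of $E^{\widetilde{\mu}}(t)\cap E^{\widetilde{\sigma}}(t)$ are both absorbed into the two constant terms — is where all the care lies, while the remaining manipulations are routine algebra.
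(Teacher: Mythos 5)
Your proposal is correct and follows essentially the same route as the paper's proof: the same decomposition of $\mathbb{E}[\Delta'_{a(t)}(t)\mid\mathcal{F}_{t-1}]$ into played-arm terms, optimal-arm terms split by the saturated/unsaturated dichotomy (handled via Lemma~\ref{lem: 7} and Definition~\ref{def: saturated}, respectively), and the $(1+\rho)\frac{2}{T^2}$ residual from the failure of $E^{\widetilde{\mu}}(t)\cap E^{\widetilde{\sigma}}(t)$. The only discrepancy --- carrying Lemma~\ref{lem: 7}'s $\frac{2}{pT^2}$ through strictly yields a constant $\frac{2(g(T)+\rho q(T))}{pT^2}$ rather than the $\frac{g(T)+\rho q(T)}{pT^2}$ budgeted in $Y_t$ --- is a harmless factor-of-two slack that the paper's own proof shares (it quietly uses $\frac{1}{pT^2}$ at the corresponding step), and it affects nothing downstream.
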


\begin{proof}
To prove $\sum_{s=1}^{t} Y_s$ is a super-martingale process, we need to show that for all $1 \leq t \leq T$ and a given filtration $\mathcal{F}_{t-1}$, $\mathbb{E}[Y_t \mid \mathcal{F}_{t-1}] \leq 0$. Conditioned on $E^{\mu}(t)$ and $E^{\sigma}(t)$, if both $E^{\widetilde{\mu}}(t)$ and $E^{\widetilde{\sigma}}(t)$ hold true, we have
$$
\omega_i(t) \leq \widetilde{\omega}_i(t) + g(T)(s_{a^{*}(t)}(t) + s_{a(t)}(t)),
$$
$$
\Gamma_i(t) \leq \widetilde{\Gamma}_i(t) + \rho q(T)(\frac{1}{\sqrt{\mid T_i(t) \mid}} + \frac{1}{\sqrt{\mid T_{a^{*}(t)}(t) \mid}}).
$$
Observe that
\begin{align}
    & \mathbb{E}[\Delta_{a(t)}^{\prime}(t) \mid \mathcal{F}_{t-1}] \nonumber\\
     \leq& g(T) \mathbb{E}[s_{a(t)}(t) \mid \mathcal{F}_{t-1}] \nonumber\\
    & + q(T) \mathbb{E}[\frac{1}{\sqrt{\mid T_{a(t)}(t) \mid}} \mid \mathcal{F}_{t-1}] \nonumber\\
    & + g(T) s_{a^{*}(t)}(t) + \rho q(T)\frac{1}{\sqrt{\mid T_{a^{*}(t)}(t) \mid}}\nonumber\\
    & ~~~\cdot \mathbb{E}[\mathbf{1}\{a(t) \in S(t)\} + \mathbf{1}\{a(t) \notin S(t)\} \mid \mathcal{F}_{t-1}] \nonumber\\
    & + (1+\rho)(1 - \mathbb{P}(E^{\widetilde{\mu}})) + (1+\rho)(1 - \mathbb{P}(E^{\widetilde{\sigma}}))\nonumber\\
    \leq & g(T)\mathbb{E}[s_{a(t)}(t) \mid \mathcal{F}_{t-1}] + \rho q(T) \mathbb{E}[\frac{1}{\sqrt{\mid T_{a(t)}(t) \mid}} \mid \mathcal{F}_{t-1}] \nonumber\\
    & + \big(g(T) s_{a^{*}(t)}(t) + \rho q(T) \frac{1}{\sqrt{\mid T_{a^{*}(t)}(t) \mid}} \big) \nonumber\\
    & ~~\cdot \mathbb{P}(a(t) \in S(t) \mid \mathcal{F}_{t-1}) \label{eq: lemma8_1}\\
    & + \mathbb{E}\big[ \big( g(T) s_{a^{*}(t)}(t)  + \rho q(T) \frac{1}{\sqrt{\mid T_{a^{*}(t)}(t) \mid}} \big) \nonumber\\
    & ~~~ \cdot \mathbf{1} \{a(t) \notin S(t) \} \mid \mathcal{F}_{t-1} \big] \label{eq: lemma8_2}\\
    & + (1 + \rho)\frac{2}{T^2} \nonumber.
\end{align}
Note that for \eqref{eq: lemma8_1}, we have
\begin{align*}
    \eqref{eq: lemma8_1} & \leq \big( g(T) s_{a^{*}(t)}(t) + \rho q(T) \frac{1}{\sqrt{\mid T_{a^{*}(t)}(t) \mid}} \big) \\
    & ~~~~\cdot \big[ \frac{1}{p}\mathbb{P}(a(t)=a^{*}(t) \mid \mathcal{F}_{t-1}) + \frac{1}{p T^2} \big]\\
    & \leq s_{a^{*}(t)}(t) \frac{g(T)}{p} \mathbb{P}(a(t)=a^{*}(t) \mid \mathcal{F}_{t-1}) \\
    &~~~ + \rho \frac{q(T)}{p} \mathbb{P}(a(t) = a^{*}(t) \mid \mathcal{F}_{t-1}) \frac{1}{\sqrt{\mid T_{a^{*}(t)}(t) \mid}} \\
    &~~~ + \frac{g(T)}{p T^2} + \frac{\rho g(T)}{p T^2}.
\end{align*}
For \eqref{eq: lemma8_2}, notice that when $a(t) \notin S(t)$, we have
\begin{align*}
    & g(T) s_{a(t)}(t) + \rho q(T) \frac{1}{\sqrt{\mid T_{a(t)}(t) \mid}} \\
    & > \ell(T) s_{a^{*}(t)}(t) + \rho h(T) \frac{1}{\sqrt{\mid T_{a^{*}(t)}(t) \mid}}.
\end{align*}
Rearrange the above inequality, we have
\begin{align*}
    s_{a^{*}(t)}(t) < \frac{g(T)}{\ell(T)} s_{a(t)}(t) + \frac{\rho q(T)}{\ell(T)} \frac{1}{\sqrt{\mid T_{a(t)}(t) \mid}}.
\end{align*}
Also note that
\begin{align*}
    \frac{1}{\sqrt{\mid T_{a^{*}(t)}(t) \mid}} < \frac{g(T)}{\rho h(T)}s_{a(t)}(t) + \frac{q(T)}{h(T)}\frac{1}{\sqrt{\mid T_{a(t)}(t) \mid}}.
\end{align*}
Hence
\begin{align*}
     \eqref{eq: lemma8_2} \leq& \mathbb{E}\Big[ \big( \frac{g(T)^2}{\ell(T)} + \frac{g(T) q(T)}{\rho h(T)} \big) s_{a(t)}(t) \\
    & + \big( \frac{\rho g(T) q(T)}{\ell(T)} + \frac{\rho q(T)^2}{h(T)} \big) \frac{1}{\sqrt{\mid T_{a(t)}(t) \mid}} \mid \mathcal{F}_{t-1}\Big].
\end{align*}
Putting all these together, we have $\mathcal{F}_{t-1}$, $\mathbb{E}[Y_t \mid \mathcal{F}_{t-1}] \leq 0$, thus $\sum_{s=1}^t Y_s$ is a super-martingale process.
\end{proof}

Now we start to prove the main result Theorem~\ref{thm: regret}.

\begin{proof}
First observe that we can bound the absolute value of $Y_t$ by $5 (1 + \rho + \frac{1}{\rho})\frac{q(T)^2}{\ell(T)}$. Therefore, by the Azuma-Hoeffding inequality, we have 
\begin{align*}
    \mathbb{P}(\sum_{t=1}^{T} Y_t \geq w) \leq \exp\Big(-\frac{w^2 \ell(T)^2}{(5(1+\rho+1/\rho))^2 q(T)^4}\Big):=\frac{\delta}{2}.
\end{align*}
Thus we set $w=5(1+\rho+1/\rho) \frac{q(T)^2}{\ell(T)} \sqrt{2 T \ln \frac{2}{\delta}}$. Then with probability at least $1-\frac{\delta}{2}$, we have
\begin{align*}
    & \sum_{t=1}^{T} \Delta^{\prime}_{a(t)}(t) \\
     \leq &\big( g(T) + \frac{g(T)^2}{\ell(T)} + \frac{g(T) q(T)}{\rho h(T)} + \frac{g(T)}{p} \big) \sum_{t=1}^{T} s_{a(t)}(t) \\
    & + \rho \big( g(T) + \frac{g(T) q(T)}{\ell(T)} + \frac{g(T)^2}{h(T)} + \frac{q(T)}{p}\big) \sum_{t=1}^{T}\frac{1}{\sqrt{\mid T_{a^{*}(t)}(t) \mid}} \\
    & + \frac{g(T) + \rho q(T)}{p T} + 5(1 + \rho + 1/\rho) \frac{q(T)^2}{\ell(T)}\sqrt{2T \ln \frac{2}{\delta}}.
\end{align*}
Using Lemma 3 in \cite{chu2011contextual}, we have 
\begin{align*}
    & \sum_{t=1}^{T} s_{a(t)}(t) \\
    & = \sum_{i=1}^{K} \sum_{s \in T_i(T)} s_{i}(t) \\
    & \leq \sum_{i=1}^{K} 5\sqrt{d \mid T_i(T) \mid \ln \mid T_i(T) \mid}\\
    & \leq 5\sqrt{d K T \ln T}.
\end{align*}
\begin{align*}
    & \sum_{t=1}^{T} \frac{1}{\sqrt{\mid T_{a(t)}(t) \mid}} \\
    & = \sum_{i=1}^{K} \sum_{s \in T_i(T)} s_{i}(t) \frac{1}{\sqrt{\mid T_{a(t)}(t) \mid}} \\
    & = \sum_{i=1}^{K} \sum_{s=1}^{\mid T_i(t) \mid} \frac{1}{\sqrt{s}}\\
    & \leq K \frac{1}{K}\sum_{i=1}^{K} 2 \sqrt{\mid T_i(t) \mid}\\
    & \leq 2 K \sqrt{\sum_{i=1}^{K} \frac{\mid T_i(t) \mid}{K}}\\
    & = 2 \sqrt{KT}.
\end{align*}
Hence, with probability at least $1-\frac{\delta}{2}$, we have 
\begin{align*}
    & \sum_{t=1}^{T} \Delta^{\prime}_{a(t)}(t) \\
    & = O((1+\rho+\frac{1}{\rho}) d\ln T  \ln \frac{K}{\delta}\sqrt{d K  T^{1+2\epsilon}  \ln \frac{K}{\delta} \frac{1}{\epsilon}}).
\end{align*}
Since $E^{\mu}(t)$ does not hold with probability at most $\frac{\delta}{T^2} T=\frac{\delta}{T} \leq \frac{\delta}{4}$ for $T \geq 4$, and $E^{\sigma}(t)$ does not hold with probability at most $\frac{\delta}{4}$. Therefore, both $E^{\mu}(t)$ and $E^{\sigma}(t)$ holds for all $t$ with probability at least $1-\frac{\delta}{2}$. Thus $\Delta_{a(t)}(t) = \Delta^{\prime}_{a(t)}(t)$ for all $t$ with probability at least $1-\frac{\delta}{2}$. Hence with probability at least $1-\delta$, we have
\begin{align*}
    & \sum_{t=1}^{T} \Delta_{a(t)}(t) \\
    & = O((1+\rho+\frac{1}{\rho}) d\ln T  \ln \frac{K}{\delta}\sqrt{d K  T^{1+2\epsilon} \ln \frac{K}{\delta} \frac{1}{\epsilon}}).
\end{align*}
\end{proof}

\section{Numerical Experiments}
\label{sec: numerical}
In the numerical experiment, we apply our proposed TS algorithms to a portfolio selection problem.

\subsection{Contextual MAB application to Finance}
\label{subsec: application}
Application of the bandit algorithm to the portfolio selection problem is not new. To name a few, \cite{shen2015portfolio} apply a UCB-type bandit algorithm to derive the optimal portfolio strategy that represents the combination of passive and active investments according to a risk-adjusted reward function. \cite{huo2017risk} apply a UCB-type bandit algorithm to the portfolio selection problem, under a risk-averse criterion. \cite{zhu2021online} propose an online portfolio selection method that also incorporates contextual information, based on the Exp4 algorithm presented in \cite{auer2002nonstochastic}. We adapt the portfolio selection model in \cite{huo2017risk} to our contextual setting and formally describe the problem setting below.

Consider a financial market with a large set of assets (for example, bonds, stocks and other financial derivatives), from which the portfolio manager selects to construct $K$ portfolios. Each portfolio consists of different assets with different weights. The industries are roughly divided into eleven sectors, namely energy, materials, industrials, communications, consumer discretionary, consumer staples, healthcare, financials, information technology, real estate, and utilities. At each round $t$, the manager collects information about industrial prosperity in those sectors, which makes up the contexts $x_{i}(t) \in [-1,1]^{d}$ for each portfolio $i \in [K]$, where $d \leq 11$ due to the possibility of being incapable to collect information for every sector. A larger context $x_i(t)^{j}$ indicates a better market condition for the sector $j \in [d]$. After observing the contexts, the manager chooses one portfolio to invest and receives the corresponding reward. For simplicity, we assume the reward of portfolio $i$ follows an unknown distribution $\nu_i$ with mean $x_{i}(t)^{\top}\mu_i$ and variance $\sigma_i^2$. The unknown mean parameter $\mu_i \in \mathbb{R}^{d}$ can be viewed as the sensitivity of the return to the industrial prosperity. The manager is risk-averse with a risk tolerance $\rho$. The goal is to minimize the cumulative regret over $T$ rounds under the mean-variance criterion. 

\subsection{Algorithms for Comparison}
We empirically evaluate the following algorithms in the portfolio selection problem. 
\begin{itemize}
    \item Our proposed MVTS-D algorithm (Algorithm~\ref{alg: ts_disjoint}).
    \item A variant of the TS algorithm MVTS-DN used in our regret analysis (Algorithm~\ref{alg: MVTS_DN}).
    \item TS algorithm originally designed for the risk-neutral setting. We compare with the TS algorithm from \cite{agrawal2013thompson}, referred to as TS-A. 
    \item Algorithms that make no use of the contexts. In particular, we compare with the Thompson sampling algorithm with mean-variance criterion for the context-free MAB setting (Algorithm MVTS in \cite{zhu2020thompson}). 
    \item A uniform sampling algorithm that randomly chooses an arm to pull at each round. 
\end{itemize}

To illustrate the necessity of taking into account the risk of the reward, we compare with the TS-A algorithm that works for a risk-neutral setting. At each round $t$, the TS-A algorithm samples $\widetilde{\mu}_i(t)$ from the Gaussian distribution $\mathcal{N}(\hat{\mu}_i(t), v^2 A_i(t)^{-1})$ for each arm $i \in [K]$, and plays the arm $a(t):=\arg\max x_i(t)^{\top} \widetilde{\mu}_i(t)$. Here $\hat{\mu}_i(t)=A_i(t)^{-1}b_i(t)$, the reward is assumed to follow a $R$-sub-Gaussian distribution, parameter $v=R \sqrt{\frac{24}{\epsilon} d \ln \left(\frac{1}{\delta}\right)}$, where $\delta \in (0,1)$ and $\epsilon \in (0,1)$ are two parameters used by the algorithm. To illustrate the necessity of making use of contexts that enables to learn the mean and variance parameters over time, we compare with the context-free MVTS algorithm. We include the details of the TS-A algorithm from \cite{agrawal2013thompson} and the context-free MVTS algorithm in \cite{zhu2020thompson} in Appendix~\ref{subsec: app_referenced_algorithms}. 

All the algorithms are tested on the portfolio selection problem over 100 replications. In each replication, we execute the algorithms and collect the total regrets over $T$ rounds. Parameters setting are summarized as follows: $K=10$, $d=8$, $T=10000$. All the implementing details are included in Appendix~\ref{subsec: app_experiment}. 

\subsection{Experimental Results}
\textbf{Experiment 1: evaluation of total regrets with different risk tolerances}. 
In this experiment, we evaluate the total regrets of different algorithms associated with different risk tolerances $\rho=0.1, 1, 10$. The reward distribution is Gaussian. Results are reported in Figure~\ref{fig: regret_rho}. 

\begin{figure}
    \centering
    \includegraphics[width=.45\textwidth]{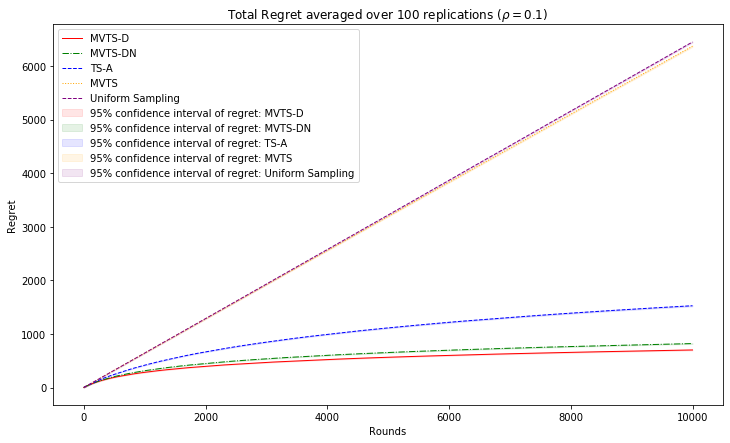}
    \includegraphics[width=.45\textwidth]{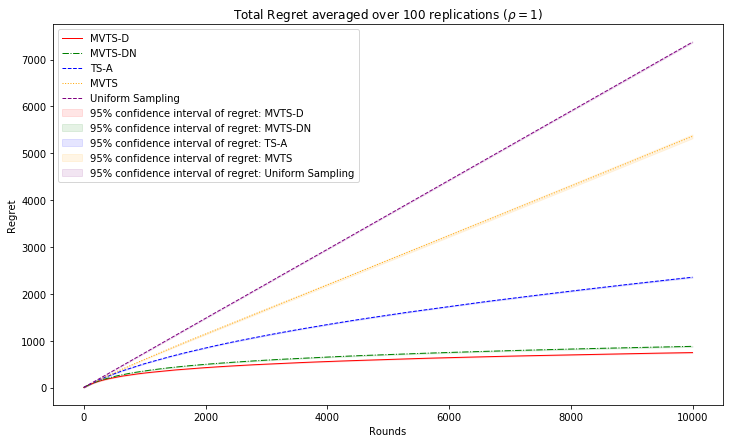}
    \includegraphics[width=.45\textwidth]{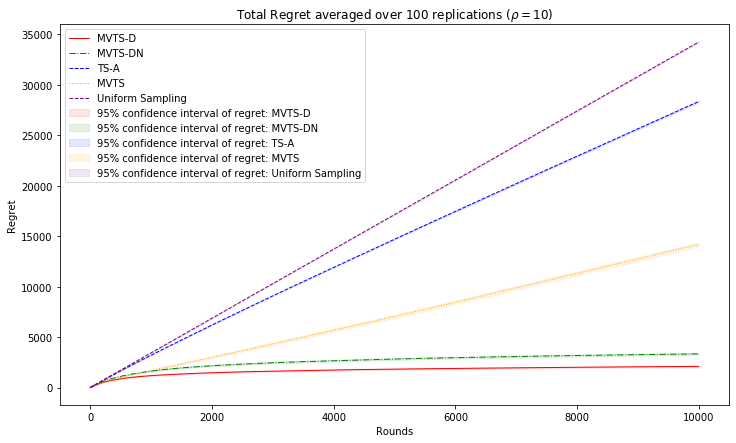}
    \caption{Total regrets comparison with different risk tolerances in the portfolio selection problem, averaged over 100 replications. }
    \label{fig: regret_rho}
\end{figure}

Figure~\ref{fig: regret_rho} shows the mean (over 100 replications) of the total regrets over time under different risk tolerances for our proposed MVTS-D and MVTS-DN algorithms, along with three benchmarks, namely TS-A, MVTS, and Uniform Sampling. From Figure~\ref{fig: regret_rho}, we have the following observations: 
\begin{itemize}
    \item Our proposed MVTS-D and MVTS-DN algorithms achieve better regrets compared to the three benchmarks in all the cases ($\rho=0.1,1,10$).
    \item As $\rho$ approaches 0 (i.e., risk-neutral case), our proposed MVTS-D and MVTS-DN algorithms behave similarly to TS-A, which corresponds to the risk-neutral case. As $\rho$ increases, our proposed MVTS-D and MVTS-DN algorithms have similarly steady performance. As for TS-A, it chooses the optimal arm according to its mean performance while overlooking the variance. As $\rho$ increases, variance tends to dominate the choice of the optimal arm. Hence, the performance of TS-A deteriorates. 
    \item In all cases ($\rho=0.1,1,10$), MVTS and Uniform Sampling behave the worst, as they make no use of the contexts and thus do not learn over time. 
\end{itemize}
\textbf{Experiment 2: evaluation of total regrets under different reward distributions}.
The reward distributions are chosen to be Gaussian, truncated normal, and uniform, respectively. Results are reported in Figure~\ref{fig: regret_distribution}. 

\begin{figure}
    \centering
    \includegraphics[width=.45\textwidth]{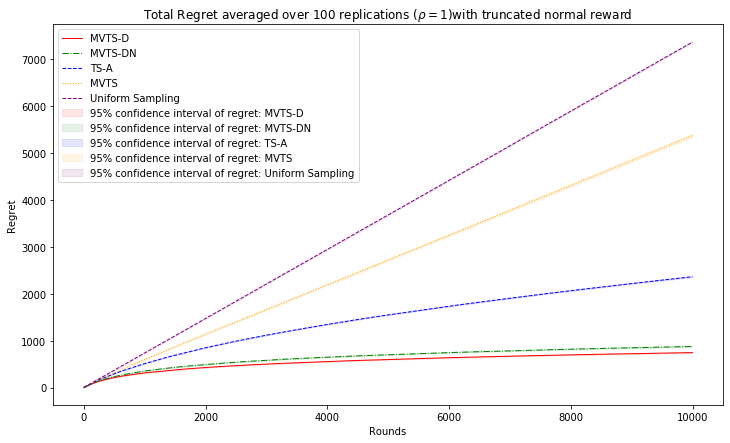}
    \includegraphics[width=.45\textwidth]{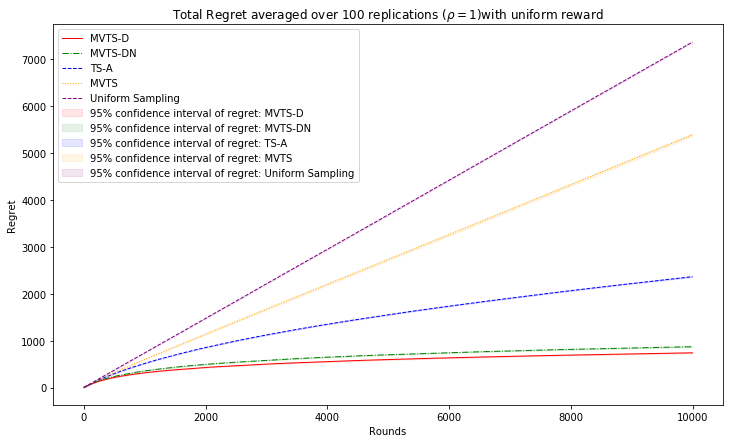}
    \caption{Total regrets comparison under different reward distributions in the portfolio selection problem, averaged over 100 replications. $\rho=1$.}
    \label{fig: regret_distribution}
\end{figure}

Figure~\ref{fig: regret_distribution} shows the mean (over 100 replications) of the total regrets over time under different reward distributions for our proposed MVTS-D and MVTS-DN algorithms, along with three benchmarks, namely TS-A, MVTS, and Uniform Sampling. From Figure~\ref{fig: regret_distribution}, we have the following observations: 
\begin{itemize}
    \item Our proposed MVTS-D and MVTS-DN algorithms are robust to model mis-specification, i.e., the assumed reward distribution is different from the true one. 
    \item Even though our regret analysis does not work for the Gaussian distribution (as the squared reward is sub-exponential instead of sub-Gaussian), in practice our proposed MVTS-DN algorithm still works well. 
\end{itemize}

\section{Conclusion}
\label{sec: conclusion}
In this paper, we apply the Thompson sampling algorithm to the contextual MAB problem under the mean-variance criterion. We show a high probability regret bound for a variant of the proposed TS algorithm. The performances of the proposed algorithm and its variant are empirically shown via a portfolio selection example, with a wide range of reward distributions. It could be an interesting future direction to apply TS algorithm, or UCB-type algorithm, to contextual MAB problem under other risk measures, such as CVaR.

\begin{appendices}
\section{Proof of Proposition~\ref{prop: posterior_disjoint}}
\label{subsec: app_disjoint}
\begin{proof}
Let $m_i(t) = A_i(t)^{-1}b_i(t)$. The prior distribution is given by:
\begin{align*}
    & \mathbb{P}(\mu_i,\lambda_i) = \mathbb{P}(\mu_i \mid \lambda_i)\mathbb{P}(\lambda_i) \\
    & = \mathcal{N}\left(m_i(t),(\lambda_i A_i(t))^{-1}\right) \cdot \text{Gamma}\left(C_i(t),D_i(t)\right) \\
    & \propto \sqrt{\lambda_i}\exp\left(-\frac{\lambda_i}{2}\left(\mu_i-m_i(t)\right)^{\top}A_i(t)\left(\mu_i-m_i(t)\right)\right) \\
    & ~~~~ \cdot \lambda_i^{C_i(t)-1}\exp\left(-D_i(t)\lambda_i\right).
\end{align*}
Similarly, the likelihood of reward $r_i(t)$ is given by:
\begin{align*}
    & \mathbb{P}(r_i(t) \mid \mu_i,\lambda_i)\\ 
    & \propto \sqrt{\lambda_i}\exp\left(-\frac{\lambda_i}{2}\left(\mu_i^{\top} x_i(t) - r_i(t)\right)^2\right).
\end{align*}
Then the posterior distribution is computed as:
\begin{align*}
    & \mathbb{P}(\mu_i,\lambda_i \mid r_i(t)) \\
    & \propto \mathbb{P}(\mu_i,\lambda_i)\cdot \mathbb{P}(r_i(t) \mid \mu_i,\lambda_i) \\
    & \propto \lambda_i^{C_i(t)}\exp\left(-\frac{\lambda_i}{2}\left(\left(\mu_i-m_i(t)\right)^{\top} A_i(t) \left(\mu_i-m_i(t)\right) \right.\right.\\
    & \left. \left.~~+ \mu_i^{\top} x_i(t)x_i(t)^{\top}\mu_i - 2\mu_i^{\top}x_i(t)r_i(t) + r_i(t)^2 + 2D_i(t)\right)\right) \\
    & = \lambda_i^{C_i(t)}\exp\left(-\frac{\lambda_i}{2}\left(\mu_i^{\top}\left(A_i(t)+x_i(t)x_i(t)^{\top}\right)\mu_i \right.\right.\\
    & \left. \left.~~- 2\mu_i^{\top}\left(b_i(t) + x_i(t)r_i(t)\right)\right.\right. \\
    & \left. \left.~~+ b_i(t)^{\top}A_i(t)^{-1}b_i(t) + 2D_i(t) + r_i(t)^2\right)\right) \\
    & = \exp\left(-\frac{\lambda_i}{2}\left(\mu_i^{\top}A_i(t+1)\mu_i - 2\mu_i^{\top} A_i(t+1)m_i(t+1) \right.\right. \\
    & \left.\left. ~~+  b_i(t)^{\top} A_i(t)^{-1}b_i(t) + 2D_i(t) + r_i(t)^2\right)\right)\lambda_i^{C_i(t)} \\ 
    & = \sqrt{\lambda_i}\exp\left(-\frac{\lambda_i}{2}\left(\left(\mu_i-m_i(t+1)\right)^{\top}A_i(t+1)\right.\right.\\
    & \left.\left.\left(\mu_i-m_i(t+1)\right) \right)\right) \cdot \lambda_i^{C_i(t+1)-1}\exp(-D_i(t+1)\lambda_i) \\
    & \propto \mathcal{N}(A_i(t+1)^{-1}b_i(t+1), (\lambda_i A_i(t+1))^{-1})\\
    & ~~ \cdot \text{Gamma}(C_i(t+1),D_i(t+1)).
\end{align*}
\end{proof}

\section{Algorithms}
\label{subsec: app_referenced_algorithms}

\begin{algorithm*}[!ht]
\SetAlgoLined
\textbf{initialization}:\\
\For{$i=1,2,\cdots,K$}
{pull arm $i$ once at round 0 and observe rewards $r_i(0)$\; 
$A_i(1)=\mathbf{I}_d + x_i(0)x_i(0)^{\top}, b_i(1)=x_i(0)r_i(0), C_i(1)=\frac{1}{2}, D_i(1)=\frac{1}{2}(r_i(0)^2-x_i(0)^{\top}A_i(1)^{-1}x_i(0)), T_i(1)=\{0\}$\;
}
\For{$t=1,2,\cdots,T$}
{
observe $K$ contexts $x_1(t),\cdots,x_K(t) \in \mathbb{R}^{d}$\;
\For{$i=1,2,\cdots,K$}
{
sample $\widetilde{\sigma}_i^2(t)$ from distribution $\mathcal{N}\left(\frac{D_i(t)}{C_i(t)}, \frac{u^2}{\mid T_i(t)\mid}\right)$\;
sample $\widetilde{\mu}_i(t)$ from distribution $\mathcal{N}\left(A_i(t)^{-1}b_i(t), v^2 A_i(t)^{-1}\right)$\;
set $\widetilde{\operatorname{MV}}_i(t) = x_i(t)^{\top}\widetilde{\mu}_i(t)-\rho \widetilde{\sigma}_i^2(t)$\;
}
play arm $a(t)=\arg\max_{i \in [K]} \widetilde{\operatorname{MV}}_i(t)$ with ties broken arbitrarily\;
observe reward $r_{a(t)}(t) \sim \nu_{a(t)}\left(x_{a(t)}(t)^{\top} \mu_{a(t)}, \sigma_{a(t)}^2\right)$\;
update ($A_i(t), b_i(t), C_i(t), D_i(t)$) according to Algorithm~\ref{alg: update_disjoint} for each arm $i$\;
set $T_{a(t)}(t+1) = T_{a(t)}(t) \bigcup \{t\}$.
}
\caption{Mean-variance Thompson Sampling for the disjoint model with variance of the reward sampled from normal distribution (MVTS-DN).}
\label{alg: MVTS_DN}
\end{algorithm*}

\begin{algorithm*}[!ht]
\SetAlgoLined
\textbf{initialization}:\\
{pull arm $i$ once at round 0 and observe rewards $r_i(0)$\; 
$A_i(1)=\mathbf{I}_d + x_i(0)x_i(0)^{\top}, b_i(1)=x_i(0)r_i(0), T_i(1)=\{0\}$\;
}
\For{$t=1,2,\cdots,T$}
{
observe $K$ contexts $x_1(t),\cdots,x_K(t) \in \mathbb{R}^{d}$\;
\For{$i=1,2,\cdots,K$}
{
compute $\hat{\mu}_i(t)=A_i(t)^{-1}b_i(t)$\;
sample $\widetilde{\mu}_i(t)$ from distribution $\mathcal{N}\left(\hat{\mu}_i(t), v^2 A_i(t)^{-1}\right)$\;
}
play arm $a(t)=\arg\max_{i \in [K]} x_i(t)^{\top}\widetilde{\mu}_i(t)$ with ties broken arbitrarily\;
observe reward $r_{a(t)}(t) \sim \nu_{a(t)}\left(x_{a(t)}(t)^{\top} \mu_{a(t)}, \sigma_{a(t)}^2\right)$\;
update $(A_i(t),b_i(t))$ according to Line~\ref{alg: posterior_update_A} and Line~\ref{alg: posterior_update_b} in Algorithm~\ref{alg: update_disjoint} for each arm $i$\;
set $T_{a(t)}(t+1)=T_{a(t)}(t) \bigcup \{t\}$.
}
\caption{TS-A algorithm from \cite{agrawal2013thompson}}
\label{alg: ts_A}
\end{algorithm*}

\label{subsec: app_MVTS}
\begin{algorithm*}[!ht]
\SetAlgoLined
\SetKwInOut{Input}{input}\SetKwInOut{Output}{output}
\Input{prior parameters ($\hat{\mu}_i(t-1),\hat{T}_i(t-1),\hat{\alpha}_i(t-1),\hat{\beta}_i(t-1)$) and new reward sample $r_{i}(t)$.}
\Output{posterior parameters ($\hat{\mu}_i(t),\hat{T}_i(t),\hat{\alpha}_i(t),\hat{\beta}_i(t)$).}
update the mean: $\hat{\mu}_{i}(t)=\frac{\hat{T}_{i}(t-1)}{\hat{T}_{i}(t-1)+1} \hat{\mu}_{i}(t-1)+\frac{1}{\hat{T}_{i}(t-1)+1} r_{i}(t)$\;
update the number of samples: $\hat{T}_i(t)=\hat{T}_i(t-1)+1$\;
update the shape parameter: $\hat{\alpha}_i(t)=\hat{\alpha}_i(t-1)+\frac{1}{2}$\;
update the rate parameter: $\hat{\beta}_i(t)=\hat{\beta}_i(t-1)+\frac{\hat{T}_{i}(t-1)}{\hat{T}_{i}(t-1)+1}\cdot\frac{(r_i(t)-\hat{\mu}_i(t-1))^2}{2}$.
\caption{Posterior updating in the MVTS algorithm.}
\label{alg: update_MVTS}
\end{algorithm*}

\begin{algorithm*}[!ht]
\SetAlgoLined
\textbf{initialization}:\\
\For{$i=1,2,\cdots,K$}
{pull arm $i$ once at round 0 and observe rewards $r_i(0)$\; 
$\hat{\mu}_i(0)=r_i(0), \hat{T}_i(0)=1, \hat{\alpha}_i(0)=\frac{1}{2}, \hat{\beta}_i(0)=\frac{1}{2}$\;
}
\For{$t=1,2,\cdots,T$}
{
\For{$i=1,2,\cdots,K$}
{
sample $\tau_{i}(t)$ from Gamma($\hat{\alpha}_i(t-1), \hat{\beta}_i(t-1)$)\;
sample $\theta_i(t)$ from $\mathcal{N}(\hat{\mu}_i(t-1), \frac{1}{\hat{T}_i(t-1)})$\;
}
play arm $a(t)=\arg\max_{i \in [K]} \theta_i(t) - \rho / \tau_{i}(t)$ and observe reward $r_{a(t)}(t)$\;
update $(\hat{\mu}_{a(t)}(t-1), \hat{T}_{a(t)}(t-1), \hat{\alpha}_{a(t)}(t-1), \hat{\beta}_{a(t)}(t-1))$ according to Algorithm~\ref{alg: update_MVTS}.
}
\caption{MVTS algorithm in \cite{zhu2020thompson}}
\label{alg: MVTS}
\end{algorithm*}

\section{Implementation Details}
\label{subsec: app_experiment}
For the portfolio selection problem, the true mean and variance parameters are summarized in Table~\ref{table: parameters}. In each replication, a new set of contexts are generated and used by all the algorithms. When the reward distribution is truncated normal, we assume it is truncated above -5 and below 5. It is worth noting that when executing the TS-A algorithm in the risk-neutral case, for a small $\epsilon$ and $\delta$, the parameter $v$ is computed so large that the total regret grows linearly. This is due to the overly large variance in the mean sampling. For meaningful experiment, we set $v=1$. For our proposed MVTS-DN algorithm, we face the same situation. Therefore, in the experiments, we set $u=1$ and $v=1$, which is different from their theoretical values.

\begin{table*}[!htp]
\centering
\begin{tabular}{llllllllll}
\toprule
             & $\mu_1$ & $\mu_2$ & $\mu_3$ & $\mu_4$ & $\mu_5$ & $\mu_6$ & $\mu_7$ & $\mu_8$ & $\sigma^2$ \\ \midrule
Portfolio 1  & 0.15  & 0.33  &-0.10  & 0.08  &-0.01 &-0.04 & 0.01 & 0.11 & 0.89 \\
Portfolio 2  & 0.14  & 0.22  & 0.15  & 0.31  & 0.02 & 0.43 &-0.08 & 0.30 & 0.66 \\
Portfolio 3  & 0.15  & 0.24  &-0.02  & 0.02  & 0.38 & 0.48 & 0.09 & 0.32 & 0.78 \\
Portfolio 4  & 0.43  & 0.44  &-0.05  &-0.08  & 0.00 & 0.43 &-0.04 & 0.15 & 0.41 \\
Portfolio 5  & 0.47  & 0.22  & 0.32  & 0.09  & 0.31 & 0.40 &-0.09 & 0.35 & 0.34 \\
Portfolio 6  & 0.49  & 0.35  & 0.07  & 0.37  &-0.04 & 0.17 & 0.45 & 0.08 & 0.91 \\
Portfolio 7  & 0.07  &-0.02  &-0.09  & 0.31  & 0.03 & 0.06 & 0.19 &-0.07 & 0.49 \\
Portfolio 8  & 0.24  &-0.01  & 0.25  & 0.32  &-0.04 & 0.15 & 0.32 & 0.15 & 0.97 \\
Portfolio 9  &-0.07  & 0.22  & 0.30  & 0.21  & 0.47 & 0.25 & 0.44 &-0.02 & 0.70 \\
Portfolio 10 &-0.02  & 0.38  & 0.14  &-0.00  & 0.46 & 0.11 & 0.35 & 0.33 & 0.66 \\ 
\bottomrule
\end{tabular}
\vspace{0.2cm}
\caption{True mean parameter $\{\mu_i\}_{i=1}^{8}$ and variance parameter $\sigma$ for all ten portfolios in the portfolio selection problem.}
\label{table: parameters}
\end{table*}

\end{appendices}

\backmatter
\bmhead{Acknowledgments}
The authors gratefully acknowledge the support by the Air Force Office of Scientific Research under Grant FA9550-19-1-0283 and Grant FA9550-22-1-0244, and National Science Foundation under Grant DMS2053489.
\bibliography{sn-bibliography} 

\begin{thebibliography}{}
\providecommand{\doi}[1]{\url{https://doi.org/#1}}
\bibcommenthead

\bibitem[\protect\citeauthoryear{Abbasi-yadkori, P\'{a}l, and
  Szepesv\'{a}ri}{Abbasi-yadkori et~al.}{2011}]{abbasi2011improved}
Abbasi-yadkori, Y., D.~P\'{a}l, and C.~Szepesv\'{a}ri. 2011.
\newblock Improved algorithms for linear stochastic bandits.
\newblock {\em Proceedings of the 25th International Conference on Neural
  Information Processing Systems\/}~24: 2312--2320 .

\bibitem[\protect\citeauthoryear{Agrawal and Goyal}{Agrawal and
  Goyal}{2013}]{agrawal2013thompson}
Agrawal, S. and N.~Goyal. 2013.
\newblock Thompson sampling for contextual bandits with linear payoffs.
\newblock {\em Proceedings of the 30th International Conference on Machine
  Learning\/}~28: 1220--1228 .

\bibitem[\protect\citeauthoryear{Ang, Lim, and Chang}{Ang
  et~al.}{2021}]{ang2021thompson}
Ang, M.L., E.Y. Lim, and J.Q. Chang. 2021.
\newblock Thompson sampling for gaussian entropic risk bandits.
\newblock {\em arXiv preprint arXiv:2105.06960\/} .

\bibitem[\protect\citeauthoryear{Auer}{Auer}{2002}]{auer2002using}
Auer, P. 2002.
\newblock Using confidence bounds for exploitation-exploration trade-offs.
\newblock {\em Journal of Machine Learning Research\/}~3: 397--422 .

\bibitem[\protect\citeauthoryear{Auer, Cesa-Bianchi, Freund, and Schapire}{Auer
  et~al.}{2002}]{auer2002nonstochastic}
Auer, P., N.~Cesa-Bianchi, Y.~Freund, and R.E. Schapire. 2002.
\newblock The nonstochastic multiarmed bandit problem.
\newblock {\em SIAM Journal on Computing\/}~32: 48--77 .

\bibitem[\protect\citeauthoryear{Baudry, Gautron, Kaufmann, and
  Maillard}{Baudry et~al.}{2021}]{baudry2021optimal}
Baudry, D., R.~Gautron, E.~Kaufmann, and O.~Maillard. 2021.
\newblock Optimal thompson sampling strategies for support-aware cvar bandits.
\newblock {\em Proceedings of the 38th International Conference on Machine
  Learning\/}~139: 716--726 .

\bibitem[\protect\citeauthoryear{Cassel, Mannor, and Zeevi}{Cassel
  et~al.}{2018}]{cassel2018general}
Cassel, A., S.~Mannor, and A.~Zeevi. 2018.
\newblock A general approach to multi-armed bandits under risk criteria.
\newblock {\em Proceedings of the 31st Conference On Learning Theory\/}~75:
  1295--1306 .

\bibitem[\protect\citeauthoryear{Chang, Zhu, and Tan}{Chang
  et~al.}{2020}]{chang2020risk}
Chang, J.Q., Q.~Zhu, and V.Y. Tan. 2020.
\newblock Risk-constrained thompson sampling for cvar bandits.
\newblock {\em arXiv preprint arXiv:2011.08046\/} .

\bibitem[\protect\citeauthoryear{Chapelle and Li}{Chapelle and
  Li}{2011}]{chapelle2011empirical}
Chapelle, O. and L.~Li. 2011.
\newblock An empirical evaluation of thompson sampling.
\newblock {\em Proceedings of the 25th International Conference on Neural
  Information Processing Systems\/}~24: 2249--2257 .

\bibitem[\protect\citeauthoryear{Chu, Li, Reyzin, and Schapire}{Chu
  et~al.}{2011}]{chu2011contextual}
Chu, W., L.~Li, L.~Reyzin, and R.~Schapire. 2011.
\newblock Contextual bandits with linear payoff functions.
\newblock {\em Proceedings of the 14th International Conference on Artificial
  Intelligence and Statistics\/}~15: 208--214 .

\bibitem[\protect\citeauthoryear{Galichet, Sebag, and Teytaud}{Galichet
  et~al.}{2013}]{galichet2013exploration}
Galichet, N., M.~Sebag, and O.~Teytaud. 2013.
\newblock Exploration vs exploitation vs safety: Risk-aware multi-armed
  bandits.
\newblock {\em Proceedings of the 5th Asian Conference on Machine
  Learning\/}~29: 245--260 .

\bibitem[\protect\citeauthoryear{Huo and Fu}{Huo and Fu}{2017}]{huo2017risk}
Huo, X. and F.~Fu. 2017.
\newblock Risk-aware multi-armed bandit problem with application to portfolio
  selection.
\newblock {\em Royal Society Open Science\/}~4: 171377 .

\bibitem[\protect\citeauthoryear{Khajonchotpanya, Xue, and
  Rujeerapaiboon}{Khajonchotpanya et~al.}{2021}]{khajonchotpanya2021revised}
Khajonchotpanya, N., Y.~Xue, and N.~Rujeerapaiboon. 2021.
\newblock A revised approach for risk-averse multi-armed bandits under cvar
  criterion.
\newblock {\em Operations Research Letters\/}~49: 465--472 .

\bibitem[\protect\citeauthoryear{Kleijn and van~der Vaart}{Kleijn and van~der
  Vaart}{2012}]{kleijn2012bernstein}
Kleijn, B.J. and A.W. van~der Vaart. 2012.
\newblock The bernstein-von-mises theorem under misspecification.
\newblock {\em Electronic Journal of Statistics\/}~6: 354--381 .

\bibitem[\protect\citeauthoryear{Kullback and Leibler}{Kullback and
  Leibler}{1951}]{kullback1951information}
Kullback, S. and R.A. Leibler. 1951.
\newblock On information and sufficiency.
\newblock {\em The Annals of Mathematical Statistics\/}~22: 79--86 .

\bibitem[\protect\citeauthoryear{Kurz, Pfaff, and Hanebeck}{Kurz
  et~al.}{2016}]{kurz2016kullback}
Kurz, G., F.~Pfaff, and U.D. Hanebeck. 2016.
\newblock Kullback-leibler divergence and moment matching for hyperspherical
  probability distributions.
\newblock {\em Proceedings of the 19th International Conference on Information
  Fusion\/}: 2087--2094 .

\bibitem[\protect\citeauthoryear{Li, Chu, Langford, and Schapire}{Li
  et~al.}{2010}]{li2010contextual}
Li, L., W.~Chu, J.~Langford, and R.E. Schapire. 2010.
\newblock A contextual-bandit approach to personalized news article
  recommendation.
\newblock {\em Proceedings of the 19th International Conference on World Wide
  Web\/}: 661--670 .

\bibitem[\protect\citeauthoryear{Liu, Derakhshani, Lambotharan, and Van~der
  Schaar}{Liu et~al.}{2020}]{liu2020risk}
Liu, X., M.~Derakhshani, S.~Lambotharan, and M.~Van~der Schaar. 2020.
\newblock Risk-aware multi-armed bandits with refined upper confidence bounds.
\newblock {\em IEEE Signal Processing Letters\/}~28: 269--273 .

\bibitem[\protect\citeauthoryear{Maillard}{Maillard}{2013}]{maillard2013robust}
Maillard, O.A. 2013.
\newblock Robust risk-averse stochastic multi-armed bandits.
\newblock ~8139: 218--233 .

\bibitem[\protect\citeauthoryear{Markowitz}{Markowitz}{1952}]{markowitz1952portfolio}
Markowitz, H. 1952.
\newblock Portfolio selection.
\newblock {\em The Journal of Finance\/}~7: 77--91 .

\bibitem[\protect\citeauthoryear{Minka}{Minka}{2001}]{minka2001expectation}
Minka, T.P. 2001.
\newblock Expectation propagation for approximate bayesian inference.
\newblock {\em Proceedings of the 17th Conference on Uncertainty in Artificial
  Intelligence\/}: 362–369 .

\bibitem[\protect\citeauthoryear{Rockafellar and Uryasev}{Rockafellar and
  Uryasev}{2000}]{rockafellar2000optimization}
Rockafellar, R.T. and S.~Uryasev. 2000.
\newblock Optimization of conditional value-at-risk.
\newblock {\em Journal of Risk\/}~2: 21--41 .

\bibitem[\protect\citeauthoryear{Sani, Lazaric, and Munos}{Sani
  et~al.}{2012}]{sani2012risk}
Sani, A., A.~Lazaric, and R.~Munos. 2012.
\newblock Risk-aversion in multi-armed bandits.
\newblock {\em Proceedings of the 26th International Conference on Neural
  Information Processing Systems\/}~25 .

\bibitem[\protect\citeauthoryear{Schlaifer and Raiffa}{Schlaifer and
  Raiffa}{1961}]{schlaifer1961applied}
Schlaifer, R. and H.~Raiffa. 1961.
\newblock {\em Applied statistical decision theory}.
\newblock Harvard University.

\bibitem[\protect\citeauthoryear{Shen, Wang, Jiang, and Zha}{Shen
  et~al.}{2015}]{shen2015portfolio}
Shen, W., J.~Wang, Y.G. Jiang, and H.~Zha. 2015.
\newblock Portfolio choices with orthogonal bandit learning.
\newblock {\em Proceedings of the 24th International Joint Conference on
  Artificial Intelligence\/}: 974--980 .

\bibitem[\protect\citeauthoryear{Thompson}{Thompson}{1933}]{thompson1933likelihood}
Thompson, W.R. 1933.
\newblock On the likelihood that one unknown probability exceeds another in
  view of the evidence of two samples.
\newblock {\em Biometrika\/}~25: 285--294 .

\bibitem[\protect\citeauthoryear{Tran, Nguyen, and Dao}{Tran
  et~al.}{2021}]{tran2021practical}
Tran, M.N., T.N. Nguyen, and V.H. Dao. 2021.
\newblock A practical tutorial on variational bayes.
\newblock {\em arXiv preprint arXiv:2103.01327\/} .

\bibitem[\protect\citeauthoryear{Vakili and Zhao}{Vakili and
  Zhao}{2015}]{vakili2015mean}
Vakili, S. and Q.~Zhao. 2015.
\newblock Mean-variance and value at risk in multi-armed bandit problems.
\newblock {\em Proceedings of the 53rd Annual Allerton Conference on
  Communication, Control, and Computing\/}: 1330--1335 .

\bibitem[\protect\citeauthoryear{Vakili and Zhao}{Vakili and
  Zhao}{2016}]{vakili2016risk}
Vakili, S. and Q.~Zhao. 2016.
\newblock Risk-averse multi-armed bandit problems under mean-variance measure.
\newblock {\em IEEE Journal of Selected Topics in Signal Processing\/}~10:
  1093--1111 .

\bibitem[\protect\citeauthoryear{Wang and Blei}{Wang and
  Blei}{2019}]{wang2019variational}
Wang, Y. and D.~Blei. 2019.
\newblock Variational bayes under model misspecification.
\newblock {\em Proceedings of the 33rd International Conference on Neural
  Information Processing Systems\/}~32 .

\bibitem[\protect\citeauthoryear{Xu, Zheng, Mazumdar, Azizzadenesheli, and
  Anandkumar}{Xu et~al.}{2022}]{xu2022langevin}
Xu, P., H.~Zheng, E.~Mazumdar, K.~Azizzadenesheli, and A.~Anandkumar. 2022.
\newblock Langevin monte carlo for contextual bandits.
\newblock {\em arXiv preprint arXiv:2206.11254\/} .

\bibitem[\protect\citeauthoryear{Zhu, Zheng, Wang, Liang, and Zhang}{Zhu
  et~al.}{2021}]{zhu2021online}
Zhu, M., X.~Zheng, Y.~Wang, Q.~Liang, and W.~Zhang. 2021.
\newblock Online portfolio selection with cardinality constraint and
  transaction costs based on contextual bandit.
\newblock {\em Proceedings of the 30th International Joint Conference on
  Artificial Intelligence\/}: 4682--4689 .

\bibitem[\protect\citeauthoryear{Zhu and Tan}{Zhu and
  Tan}{2020}]{zhu2020thompson}
Zhu, Q. and V.~Tan. 2020.
\newblock Thompson sampling algorithms for mean-variance bandits.
\newblock ~119: 11599--11608 .

\bibitem[\protect\citeauthoryear{Zimin, Ibsen-Jensen, and Chatterjee}{Zimin
  et~al.}{2014}]{zimin2014generalized}
Zimin, A., R.~Ibsen-Jensen, and K.~Chatterjee. 2014.
\newblock Generalized risk-aversion in stochastic multi-armed bandits.
\newblock {\em arXiv preprint arXiv:1405.0833\/} .

\end{thebibliography}




\end{document}